\documentclass{article}

\usepackage{microtype}
\usepackage{graphicx}
\usepackage{subcaption}
\usepackage{booktabs} % for professional tables
\usepackage[utf8]{inputenc}
\usepackage{listings}
\usepackage{hyperref}

\usepackage[accepted]{icml2026}

\usepackage{amsmath}
\usepackage{amssymb}
\usepackage{mathtools}
\usepackage{amsthm,xcolor}
\usepackage{bm}

\usepackage[most]{tcolorbox}

\newtcolorbox{specbox}{
  colback=black!5,      % light gray background
  colframe=black!5,     % invisible border (same as bg)
  boxrule=0pt,
  arc=2pt,              % slightly rounded corners; set 0pt for sharp
  breakable,            % allow page breaks across the box
  enhanced,
  left=8pt, right=8pt, top=6pt, bottom=6pt,
  before skip=8pt, after skip=8pt,
}

\definecolor{keywordcolor}{HTML}{4169e1}   %
\definecolor{tacticcolor}{HTML}{4169e1}    %
\definecolor{commentcolor}{HTML}{2e8b57}   %
\definecolor{symbolcolor}{HTML}{f75394}    %
\definecolor{sortcolor}{HTML}{4169e1}      %
\definecolor{attributecolor}{HTML}{f75394} %
\definecolor{lg}{gray}{0.95}

\usepackage[capitalize,noabbrev]{cleveref}
\crefname{equation}{Eq.}{Eqs.}
\Crefname{equation}{Eq.}{Eqs.}

\usepackage{aliascnt}

\theoremstyle{plain}
\newtheorem{theorem}{Theorem}[section]

\newaliascnt{proposition}{theorem}

\aliascntresetthe{proposition}
\crefname{proposition}{Proposition}{Propositions}

\newaliascnt{lemma}{theorem}
\newtheorem{lemma}[lemma]{Lemma}
\aliascntresetthe{lemma}
\crefname{lemma}{Lemma}{Lemmas}

\newaliascnt{corollary}{theorem}
\newtheorem{corollary}[corollary]{Corollary}
\aliascntresetthe{corollary}
\crefname{corollary}{Corollary}{Corollaries}

\theoremstyle{definition}
\newaliascnt{definition}{theorem}

\aliascntresetthe{definition}
\crefname{definition}{Definition}{Definitions}

\newaliascnt{assumption}{theorem}

\aliascntresetthe{assumption}
\crefname{assumption}{Assumption}{Assumptions}

\theoremstyle{remark}
\newaliascnt{remark}{theorem}

\aliascntresetthe{remark}
\crefname{remark}{Remark}{Remarks}

\usepackage[textsize=tiny]{todonotes}

\icmltitlerunning{AI4SLT: Empirical Processes in Lean 4 for Formal Statistical Learning Theory}

\begin{document}

\twocolumn[
  \icmltitle{AI4SLT: Empirical Processes in Lean 4 for Formal Statistical Learning Theory}

  \begin{icmlauthorlist}
    \icmlauthor{Yuanhe Zhang}{wwk}
    \icmlauthor{Jason D. Lee}{ucb}
    \icmlauthor{Fanghui Liu}{sjtu}
  \end{icmlauthorlist}

  \icmlaffiliation{wwk}{Department of Statistics, University of Warwick, UK {\em (yuanhe.zhang@warwick.ac.uk)}.}
  \icmlaffiliation{ucb}{EECS and Statistics, University of California, Berkeley, USA {\em (jasondlee@berkeley.edu)}.}
  \icmlaffiliation{sjtu}{School of Mathematical Sciences, Institute of Natural Sciences and MOE-LSC, Shanghai Jiao Tong University, China. Part of work was done at Department of Computer Science, University of Warwick, UK}

  \icmlcorrespondingauthor{Fanghui Liu}{fanghui.liu@sjtu.edu.cn}

  % You may provide any keywords that you find helpful for describing your
  % paper; these are used to populate the "keywords" metadata in the PDF but
  % will not be shown in the document
  \icmlkeywords{Machine Learning, ICML}

  \vskip 0.3in
]

% this must go after the closing bracket ] following \twocolumn[ ...

% This command actually creates the footnote in the first column listing the
% affiliations and the copyright notice. The command takes one argument, which
% is text to display at the start of the footnote. The \icmlEqualContribution
% command is standard text for equal contribution. Remove it (just {}) if you
% do not need this facility.

% Use ONE of the following lines. DO NOT remove the command.
% If you have no special notice, KEEP empty braces:
\printAffiliationsAndNotice{}  % no special notice (required even if empty)
% Or, if applicable, use the standard equal contribution text:
% \printAffiliationsAndNotice{\icmlEqualContribution}

\begin{abstract}
We present the first comprehensive Lean 4 formalization of statistical learning theory (SLT) grounded in empirical process theory. Our end-to-end formal infrastructure implement the missing contents in latest Lean library, including a complete development of Gaussian Lipschitz concentration, Dudley’s entropy integral theorem for sub-Gaussian processes, and an application to least-squares (sparse) regression with a sharp rate.
The project was carried out using a human-AI collaborative workflow, in which humans design proof strategies and AI agents execute tactical proof construction, leading to the human-verified Lean 4 toolbox for SLT.
Beyond implementation, the formalization process exposes and resolves implicit assumptions and missing details in standard SLT textbooks, enforcing a granular, line-by-line understanding of the theory. This work establishes a reusable formal foundation and opens the door for future developments in machine learning theory. The code is provided in \url{https://github.com/YuanheZ/lean-stat-learning-theory}.
\end{abstract}

\section{Introduction}

Statistical learning theory (SLT), and more generally, machine learning theory, successfully guided the progress of machine learning over the past two decades, informing foundational concepts such as bias-variance trade-offs, regularization, and cross-validation \citep{hastie2009elements}.
Now it tries to capture the picture for complex architectures such as deep neural networks \cite{lecun2015deep} and large language models \cite{brown2020language} at some points, e.g., double descent \citep{belkin2019reconciling,mei2022generalization}, benign overfitting \citep{bartlett2020benign,tsigler2023benign}, and single/multi-index model \cite{montanari2025dynamical,abbe2022merged,bruna2025survey}.

However, as models become increasingly complex, contemporary theoretical analyses have grown substantially longer and more intricate. Modern proofs often rely on a wide range of advanced mathematical tools or inspired by statistical physics. This broad techniques place significant strain on human review (i.e., \emph{\textbf{verification at scale}}): it becomes difficult to verify intermediate lemmas, track logical dependencies, and clearly identify which techniques are applicable at each stage of the argument.
Besides, some core techniques in SLT, e.g., concentration inequalities, covering, are without a structured, machine-readable library, leading to \emph{\textbf{untapped reusability}}.

Formalization in interactive theorem provers such as Lean 4 \citep{moura2021lean} addresses both challenges and is rapidly gaining traction in the math community. By encoding proofs in a formal language, we obtain machine-checkable correctness guarantees while simultaneously creating a structured, queryable library of results. We argue that formalizing SLT is not merely an exercise in rigor, but a foundation for scalable, automated theoretical analysis of machine learning systems.
Current Lean 4 implementation in machine learning includes reinforcement learning theory \citep{zhang2025towards}, optimization \citep{li2024formalization,li2025formalizationa,li2025formalizationb}.
The most close to our work is conducted by \citet{sonoda2025lean} on generalization bounds via Rademacher complexity, but limited to simple settings, see the discussion in \cref{sec:diff}.

Unlike more self-contained mathematical areas such as number theory or algebra, where formalization has flourished thanks to clean axiomatic foundations, SLT lies at the intersection of multiple disciplines, related to empirical process theory \cite{van1996weak}. To be specific, as shown in \cref{fig:level-1}, the excess risk of a learning algorithm is governed by the supremum of an empirical process indexed by the loss class. Controlling this supremum requires two interlocking components: \emph{concentration inequalities} \citep{Boucheron2013concen} that convert high-probability bounds into link to complexity measure, and \emph{capacity control} that quantifies the effective size of localized function classes via complexity measure and metric entropy. Each tool demands careful treatment of measurability, integrability, and topological assumptions that textbooks routinely leave implicit. More importantly, these tools remain extremely {\bf undeveloped} in Lean 4.

\begin{figure}[t]
    \centering
    \includegraphics[width=\linewidth]{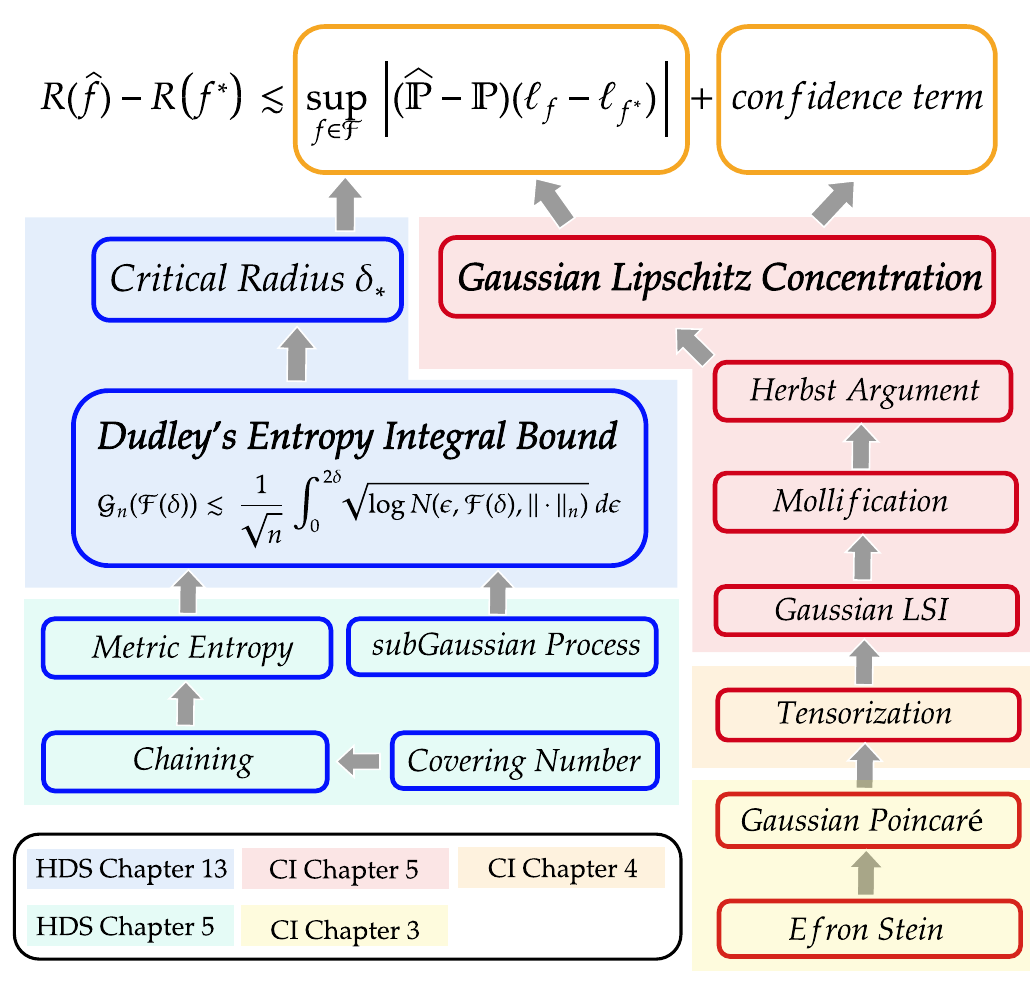}
    \caption{Lean formulation for Localized Empirical Process Framework. It includes the \textcolor{blue!60!black}{blue} part for the capacity control and the \textcolor{red!60!black}{red} part for concentration. The colored zone indicates the major results in the chapters of \citet{wainwright2019high} (High Dimensional Statistics, HDS) and \citet{Boucheron2013concen} (Concentration Inequality, CI).}
    \label{fig:level-1}\vspace{-0.3cm}
\end{figure}

In this work, we rise to this challenge by formalizing all infrastructures for SLT in Lean 4 \emph{from scratch} via a human–AI collaboration system. Starting from basic measure-theoretic probability and analysis, we systematically develop the full stack of tools required for modern generalization analysis. The dependency structure, illustrated in \cref{fig:level-1}, reveals our formalization path, including several key parts of representative books \cite{wainwright2019high,Boucheron2013concen}.
Our contributions are:

{\bf 1. Implementation on Gaussian Lipschitz Concentration:}
We construct a \textit{complete formal development of Gaussian Lipschitz concentration} which requires building a substantial infrastructure across Efron-Stein inequality, Gaussian Poincaré inequality, Density argument, and Gaussian logarithmic Sobolev inequality (LSI).
The Gaussian LSI, in particular, is a foundational tool in high-dimensional probability with far-reaching applications beyond learning theory. To our knowledge, this constitutes \textit{the first formalization of the complete Gaussian analysis tools} in any theorem prover.

{\bf 2. Implementation on Dudley's Entropy Integral:}
We provide the \emph{first formalization of Dudley's entropy integral theorem for sub-Gaussian processes} in Lean 4. This is a cornerstone result in empirical process theory that bounds the expected supremum of a stochastic process by an integral involving metric entropy. Our formalization encompasses the full generality of sub-Gaussian processes. The development required formalizing the sophisticated chaining technique that decomposes a stochastic process into a telescoping sum over dyadic approximations, along with rigorous treatment of covering and packing numbers in metric spaces.

{\bf 3. Application: Least-Squares Framework via Localized Empirical Process:}
We demonstrate the practical utility of our formalizations by developing a unified framework for least-squares regression based on the localized empirical process. We further test the functionability of our formal unified framework on linear regression and $\ell_1$-constrained regression to obtain sharp rates up to minimax-level.

{\bf 4. Human-AI Collaborative Formalization Paradiagm:}
Our formalization is completed through structured collaboration between human mathematicians and Claude Code \citep{claudecode2025} with Opus-4.5 \citep{claudeopus45}. Humans analyze Mathlib's infrastructure, design proof strategies, and decompose complex theorems into manageable lemmas; the AI agent executes these plans and constructs formal proofs. The entire process totals $\sim$500 hours of supervised development, with all formalizations compiled without \lstinline{sorry} or \lstinline{axiom}. 
This provides one \emph{realization} of how large-scale formalization projects can be substantially accelerated through well-designed human–AI collaboration.

The scale of our contribution is substantial: the project comprises approximately 30,000 lines of Lean 4 code. We provide a list of our major formalizations in \cref{app:docs}. 
Crucially, this effort {\bf goes far beyond implementation}. Achieving a complete formalization requires a granular, line-by-line understanding of SLT: every definition, assumption, inequality, and logical dependency must be explicitly identified, verified, and composed into a coherent proof structure.

This makes the project particularly valuable for student training in theoretical machine learning. Engaging with the formalization demands mastery of the full technical stack of SLT rather than superficial familiarity. By providing a rigorous, end-to-end formal infrastructure for SLT grounded in the empirical process, this work offers a principled training ground and opens the door for students seeking to develop deep theoretical competence in modern machine learning.

\section{SLT from Natural Language to Lean 4}
\vspace{-0.2cm}

In this section, we take an overview of the structure of SLT in natural language and diagnose what are missing or should be built in Lean 4.

\subsection{Statistical learning theory for generalization}
Empirical process theory provides a unified uniform convergence framework for generalization guarantees of learning algorithms by exploiting the geometry of function classes.

As shown in \cref{fig:level-1}, formally, let $\mathcal{F}$ be a hypothesis class and $\ell_f(z)$ be a loss function associated with $f \in \mathcal{F}$. For a broad class of empirical risk minimization and regularized learning procedures, the excess risk of an estimator $\hat{f}$ admits
\begin{align*}
& R(\hat f)-R(f^\star) 
 \lesssim
\underbrace{
\sup_{f\in\mathcal F}
\big|(\hat{\mathbb P}-\mathbb P)(\ell_{f}-\ell_{f^\star})\big|
}_{\text{empirical process fluctuation}}
\;+ (\text{confidence})\,,
\label{eq:excess-risk-decomposition}
\end{align*}
where $\mathbb{P}$ and $\hat{\mathbb{P}}$ denote the population and empirical measures, respectively. This decomposition highlights that generalization is governed by the uniform deviation of an empirical process indexed by the excess loss class.
The global structure of this framework includes two main parts via several probabilistic toolbox (see \cref{fig:level-1}).

\begin{figure*}[t]
    \centering
    \includegraphics[width=\linewidth]{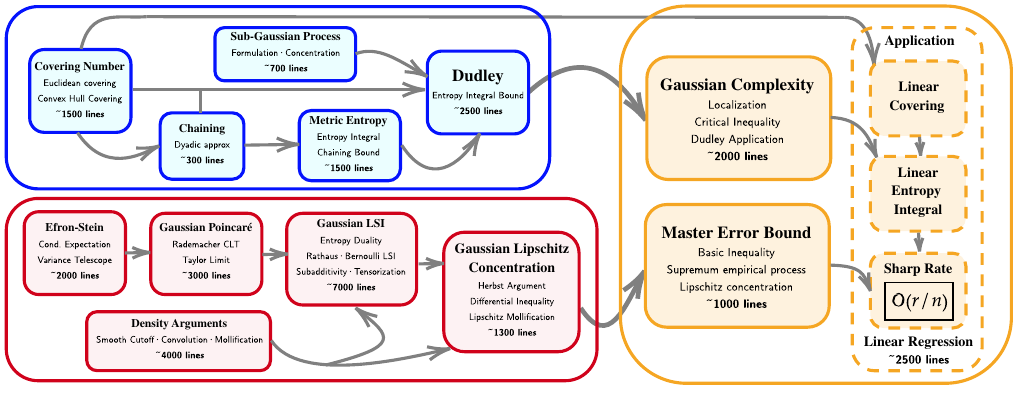}
    \caption{The dependency graph of our formalizations. All the contents in the graph have not been implemented in Lean 4 before.}
    \label{fig:level-2}
\end{figure*}

\noindent {\bf Concentration:} High-probability bounds on the empirical process fluctuation can be obtained via concentration inequalities, such as \emph{Gaussian Lipschitz concentration} \citep[Corollary 14.15]{wainwright2019high}. These yield bounds in the form of \emph{critical radius} $\delta_\star$.

\noindent {\bf Capacity Control:} The key to obtain $\delta_\star$ is \emph{localization}: rather than controlling the empirical process over the entire class $\mathcal{F}$, one restricts attention to a localized class $\mathcal{F}(\delta) = \{f \in \mathcal{F} : d(f, f^\star) \leq \delta\}$ consisting of functions within radius $\delta$ of the optimum with respect to a suitable metric $d$.

The effective size of $\mathcal{F}(\delta)$ is quantified by localized complexity measures, e.g., Gaussian complexity $\mathcal{G}_n(\mathcal{F}(\delta))$ and Rademacher complexity $\mathcal{R}_n(\mathcal{F}(\delta))$.
Their connection to geometry is made explicit through \emph{metric entropy}, $\log N(\mathcal{F}(\delta), \epsilon, d)$ defined via \emph{covering numbers}: how many balls of radius $\epsilon$ are needed to cover $\mathcal{F}(\delta)$. In particular, \emph{chaining arguments}, most notably Dudley’s entropy integral, relate localized Gaussian complexity to metric entropy:
$$
\mathcal{G}_n(\mathcal{F}(\delta)) \;\lesssim\; \frac{1}{\sqrt{n}} \int_0^{2\delta} \sqrt{\log N(\epsilon,\mathcal{F}(\delta),d)} \, \mathrm{d}\epsilon\,.
$$

This characterization reveals how complexity accumulates across scales and leads to a solution set whose smallest solution defines the critical radius $\delta_\star$. The resulting bounds recover sharp minimax rates in parametric settings and extend naturally to nonparametric models.

\subsection{Formalization Gaps}
\label{sec:diff}
Despite the theoretical maturity of the above framework, its formalization in Lean 4 remains in its infancy. 
Recent work by \citet{sonoda2025lean} formalizes generalization bounds via Rademacher complexity, including basic tools such as McDiarmid’s inequality and Hoeffding’s lemma. However, their analysis controls the empirical process over the {\bf entire} function class, leading to loose rates and limited applications. The {\bf sharper localized empirical process framework} goes beyond this in two folds:

First, the required {\bf concentration machinery} in this project is substantially more advanced. While McDiarmid-type inequalities suffice under boundedness assumptions, localized analysis relies on Gaussian Lipschitz concentration, whose proof draws on a deep chain of results from functional analysis and probability theory, that requires formulation or significant changes in Lean 4.

Second, the capacity control has quite limited formulation in latest Lean library. Formalizing localization requires developing covering numbers, chaining arguments underlying Dudley’s integral, localized complexity measures, and the fixed-point analysis that determines the critical radius.

Our work bridges precisely this gap via the comprehensive Lean 4 formalization.
Importantly, this effort \emph{\textbf{goes far beyond mechanical translation}}. Natural-language proofs routinely suppress measurability and topological assumptions, conflate almost-sure and pointwise statements, and compress multi-step arguments into informal phrases. Formalization forces each of these gaps to be made explicit and resolved. Moreover, Lean demands careful proof engineering: for example, formulazation of Dudley’s entropy integral requires systematic coordination between different notions of integration (e.g., Bochner and interval integrals), which are distinct in Lean but mixed in language proofs.

\section{Formulation details and challenges}
\label{sec:toolbox}

In this section, we present the details of our formalization (see \cref{fig:level-2}), covering Gaussian Lipschitz concentration in \cref{sec:hds} and Dudley’s entropy integral bound in \cref{sec:dudley}. For clarity, we first state each result in natural-language theorem form, followed by the corresponding Lean 4 formalization, where we explicitly discuss the key modeling and proof-engineering challenges.

\subsection{High-Dimensional Gaussian Analysis Toolbox}
\label{sec:hds}

Gaussian functional inequalities are the backbone of HDS and SLT.
However, the complete proofs form a long chain of disparate methods. Each link relies on a different piece of analysis, and each is nontrivial to formalize. We build a reusable, deliberately end‑to‑end formal toolbox that supports the full analytic pipeline from scratch, see the red part of \cref{fig:level-2} with the following steps.

\begin{theorem}[{\textcolor{red!60!black}{\bf i. Efron-Stein's Inequality}}, Theorem 3.20 in \citet{Boucheron2013concen}]\label{thm:efron}
    Let $\bm{X}= (X_1\,,...\,,X_n)$ be a vector of $n$ independent random variables and $Z=f(\bm{X})$ be a square-integrable function of $X$. Denote $E^{(i)}$ as the conditional expectation conditioned on $(X_1\,,...\,,X_{i-1}\,,X_{i+1}\,,...\,,X_n)$. Then,
    \[
    \operatorname{Var}(Z) \leq \sum_{i=1}^n \mathbb E\left[\left(Z - \mathbb E^{(i)}[Z]\right)^2\right]\,.
    \]
\end{theorem}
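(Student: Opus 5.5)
The plan is to use the standard martingale-difference (Doob) decomposition. Write $\mathbb E_i[\cdot]$ for conditional expectation given $(X_1,\dots,X_i)$, so $\mathbb E_0 Z=\mathbb E Z$ and $\mathbb E_n Z = Z$. Set $\Delta_i = \mathbb E_i Z - \mathbb E_{i-1} Z$. Then $Z-\mathbb E Z=\sum_{i=1}^n \Delta_i$. Each $\Delta_i\in L^2$, since conditional expectation is an $L^2$ contraction.

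\textbf{Step 1: orthogonality.} For $i<j$, $\Delta_i$ is $\sigma(X_1,\dots,X_{j-1})$-measurable and $\mathbb E_{j-1}\Delta_j=0$. By the tower property, $\mathbb E[\Delta_i\Delta_j]=\mathbb E[\Delta_i\,\mathbb E_{j-1}\Delta_j]=0$. Hence
\[
\operatorname{Var}(Z)=\sum_{i=1}^n \mathbb E[\Delta_i^2].
\]

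\textbf{Step 2: the key identity $\mathbb E_{i-1}Z = \mathbb E_i[\mathbb E^{(i)}Z]$.} Independence lets us compute conditional expectations by integrating out coordinates. Concretely, in the product space, $\mathbb E^{(i)}Z$ is obtained by integrating $f$ against the law of $X_i$ alone. Applying $\mathbb E_i$ to this then integrates out $X_{i+1},\dots,X_n$. The result is the function of $(X_1,\dots,X_{i-1})$ obtained by integrating out $X_i,\dots,X_n$, which is exactly $\mathbb E_{i-1}Z$. Together with $\mathbb E_i Z=\mathbb E_i[Z]$, this gives
\[
\Delta_i=\mathbb E_i\big[Z-\mathbb E^{(i)}Z\big].
\]

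\textbf{Step 3: Jensen.} Conditional Jensen, or the $L^2$-contraction of $\mathbb E_i$, gives $\Delta_i^2\le \mathbb E_i[(Z-\mathbb E^{(i)}Z)^2]$ almost surely. Taking expectations yields $\mathbb E[\Delta_i^2]\le \mathbb E[(Z-\mathbb E^{(i)}Z)^2]$. Summing over $i$ and using Step 1 proves the claim.

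The main obstacle is Step 2, the commutation of conditional expectations under independence. Informally this is a one-liner via Fubini, but formally it requires two things. First, we must realize $\mathbb E^{(i)}$ and $\mathbb E_i$ concretely as partial integrals on the product measure $\bigotimes_k \mu_k$. Second, we must verify a.e.\ equality with the abstract conditional expectations by checking the defining integral identity on generating rectangles, using Fubini–Tonelli and integrability of $f$ (from $f\in L^2$ on a probability space).

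To reduce the bookkeeping, I would first prove that $\mathbb E^{(i)}$ and $\mathbb E_i$ commute, i.e.\ $\mathbb E_i\mathbb E^{(i)}=\mathbb E^{(i)}\mathbb E_i=\mathbb E_{i-1}$. This follows because the two conditioning $\sigma$-algebras are independent-coordinate $\sigma$-algebras whose intersection is $\sigma(X_1,\dots,X_{i-1})$. I would work directly on the product space $\prod_k \Omega_k$ with coordinate projections rather than with abstract random vectors, and then transfer the result via the joint law.
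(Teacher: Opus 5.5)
Your proposal is correct and follows essentially the same route as the paper: a telescoping (Doob martingale) decomposition of $Z-\mathbb E Z$, orthogonality of the increments via the tower property, the Fubini identity $\mathbb E_{i-1}Z=\mathbb E_i[\mathbb E^{(i)}Z]$, and conditional Jensen. The only difference is in the bookkeeping. The paper defines $\mathbb E^{(i)}$ directly as the partial integral over coordinate $i$ against $\mu_i$ on the product space, and it discharges the tower, Fubini-swap and slice-integrability steps through a single transfer lemma (resampling one coordinate with a fresh independent draw preserves the joint law), rather than by identifying abstract conditional expectations on rectangles.
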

We start with formalizing $E^{(i)}$ as follows:
\begin{lstlisting}
noncomputable def condExpExceptCoord (i : Fin n) (f : (Fin n → Ω) → ℝ) : (Fin n → Ω) → ℝ :=
  fun x => ∫ y, f (Function.update x i y) ∂(μs i)
\end{lstlisting}

{\bf Implementation challenge:} \cref{thm:efron} allows each $X_i$ to have a distinct distribution $\mu_i$, which complicates measure-theoretic arguments for coordinate-wise updates. We address this by formalizing a universal transfer lemma: resampling one coordinate with a fresh independent sample preserves the joint distribution. Though requiring extra formalization of measure-theoretic machinery (measure rectangles), this lemma powers 20+ usages across tower properties, Fubini-style swapping, and slice integrability.

Then, we can formalize \cref{thm:efron} as:
\begin{lstlisting}
theorem efronStein (f : (Fin n → Ω) → ℝ) (hf : MemLp f 2 μˢ) :
    variance f μˢ ≤ ∑ i : Fin n, ∫ x, (f x - condExpExceptCoord (μs := μs) i f x)^2 ∂μˢ := by
\end{lstlisting}

\paragraph{\textcolor{red!60!black}{ii. Gaussian Poincar\'{e} Inequality:}}
To formalize Gaussian Poincar\'{e} inequality, we need to use Efron-Stein's infrastructures and require a series of results as below.
\begin{corollary}\label{thm:GPI}
    Let $X$ be a standard Gaussian random variable and $f\in C^\infty_c(\mathbb R)$. Then,
$ \operatorname{Var}[f(X)]\leq \mathbb E \left[f'(X)^2\right]$.
\end{corollary}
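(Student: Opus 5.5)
The plan follows the classical route of \citet{Boucheron2013concen}: approximate a Gaussian by normalized sums of Rademacher variables, apply \cref{thm:efron} on the discrete cube, and pass to the limit with the central limit theorem. Fix $f\in C_c^\infty(\mathbb R)$ and let $\varepsilon_1,\dots,\varepsilon_n$ be i.i.d.\ uniform on $\{-1,1\}$, with $S_n=n^{-1/2}\sum_{j=1}^n\varepsilon_j$. Set $Z=f(S_n)$, a bounded and hence square-integrable function of the independent vector $(\varepsilon_1,\dots,\varepsilon_n)$, so \cref{thm:efron} applies with $\mu_i$ the Rademacher law on each coordinate.

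\textbf{Bounding the Efron--Stein terms.} Since $\varepsilon_i$ takes only two values,
\[
Z-\mathbb E^{(i)}[Z]=\tfrac12\bigl(f(S_n)-f(S_n^{(i)})\bigr),
\]
where $S_n^{(i)}$ is $S_n$ with $\varepsilon_i$ flipped, so $S_n-S_n^{(i)}=2\varepsilon_i/\sqrt n$. Each summand is therefore $\tfrac14\mathbb E\bigl[(f(S_n)-f(S_n^{(i)}))^2\bigr]$. Taylor's theorem with $K=\sup|f''|<\infty$ gives
\[
|f(S_n)-f(S_n^{(i)})|\le \frac{2}{\sqrt n}|f'(S_n)|+\frac{2K}{n}.
\]
Squaring, the $n$ summands add up to
\[
\operatorname{Var}[f(S_n)]\le \mathbb E\bigl[f'(S_n)^2\bigr]+O(n^{-1/2}),
\]
where the error constant depends only on $\sup|f'|$ and $K$. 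In Lean this step is a pointwise inequality on $\{-1,1\}^n$ followed by a finite-sum integral bound, which is routine.

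\textbf{Passing to the limit.} By the central limit theorem, $S_n\Rightarrow X\sim N(0,1)$. Both $f$ and $f^2$ are bounded and continuous, so $\mathbb E f(S_n)\to\mathbb E f(X)$ and $\mathbb E f(S_n)^2\to\mathbb E f(X)^2$, which gives $\operatorname{Var}[f(S_n)]\to\operatorname{Var}[f(X)]$. Likewise $(f')^2$ is bounded and continuous, so $\mathbb E f'(S_n)^2\to\mathbb E f'(X)^2$. Letting $n\to\infty$ in the inequality above yields \cref{thm:GPI}.

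\textbf{Main obstacle.} I expect the hard step to be the CLT for Rademacher sums in Lean, along with the convergence in distribution tested against bounded continuous functions. If Mathlib's CLT is unavailable or awkward, my fallback is to avoid discretization altogether. I would apply \cref{thm:efron} directly to $X=n^{-1/2}\sum_j G_j$ with the $G_j$ i.i.d.\ Gaussian, which still needs a one-dimensional Gaussian variance bound per coordinate. A second fallback is an interpolation/semigroup argument: write $\operatorname{Var} f(X)=\int_0^\infty e^{-t}\,\mathbb E\bigl[f'(X)\,(P_tf)'(X)\bigr]\,dt$ via Gaussian integration by parts, and conclude with Cauchy--Schwarz and $|(P_tf)'|\le e^{-t}P_t|f'|$. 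The first route reuses the Efron--Stein infrastructure more directly, so I would attempt it first and isolate the CLT as a standalone lemma.
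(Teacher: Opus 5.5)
Your main argument is correct and follows the same route the paper formalizes, which is the classical Boucheron--Lugosi--Massart argument: Efron--Stein for $f(S_n)$ on the Rademacher cube, the Taylor bound $|f(S_n)-f(S_n^{(i)})|\le \frac{2}{\sqrt n}|f'(S_n)|+\frac{2K}{n}$, and weak convergence of $S_n$ to a standard Gaussian tested against the bounded continuous functions $f$, $f^2$, $(f')^2$. One small slip in your semigroup fallback: the representation should read $\operatorname{Var} f(X)=\int_0^\infty e^{-t}\,\mathbb E[f'(X)\,(P_tf')(X)]\,dt$ (equivalently $\int_0^\infty \mathbb E[f'(X)(P_tf)'(X)]\,dt$), because as you wrote it there is an extra factor $e^{-t}$, and your Cauchy--Schwarz step would then give the false constant $\tfrac12$.
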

Notice that \cref{thm:GPI} is not directly used to derive the Gaussian LSI, instead its intermediate proof is re-used.

{\bf Implementation challenge:} Formalizing the \cref{thm:GPI} combines Taylor expansion bounds with weak convergence of Rademacher sums to Gaussian, which needs careful measure-theoretic tracking through bounded continuous function wrappers and coordinate-permutation symmetry. Notice that such machinery is frequent in Gaussian analysis.

We formalize \cref{thm:GPI} as:
\begin{lstlisting}
theorem gaussianPoincare {f : ℝ → ℝ} (hf : CompactlySupportedSmooth f) :
    variance (fun x => f x) stdGaussian.toMeasure ≤
    ∫ x, (deriv f x)^2 ∂stdGaussian.toMeasure := by
\end{lstlisting}

\paragraph{\textcolor{red!60!black}{iii. Density Arguments:}}
The density arguments provide an efficient tool which let people prove inequalities for smooth and compactly supported function class $C^\infty_c$ (easier to apply convergence theorems) then extend to general class by such argument. This is the key to:
1) extend Gaussian LSI from $C^\infty_c$ to $C^1$, and
2) extend Gaussian Lipschitz concentration from $C^\infty_c$ to the general Lipschitz class. Such arguments are often \emph{\textbf{skipped in textbooks}} \citep{Boucheron2013concen} due to the complexity.
We start with defining the membership of Gaussian Sobolev space\footnote{Here $f$ should be continuously differentiable which ensures the derivative is well-defined so we can use \lstinline{fderiv} from Lean 4.} $\mathcal W^{1,2}(\gamma^{\otimes n})$ as:
\begin{lstlisting}
def MemW12Gaussian (n : ℕ) (f : E n → ℝ) (γ : Measure (E n)) : Prop :=
  MemLp f 2 γ ∧ MemLp (fun x ↦ fderiv ℝ f x) 2 γ
\end{lstlisting}
and the squared Gaussian Sobolev norm as:
\begin{lstlisting}
noncomputable def GaussianSobolevNormSq (n : ℕ) (f : E n → ℝ) (γ : Measure (E n)) : ℝ≥0∞ :=
  eLpNorm f 2 γ ^ 2 + eLpNorm (fun x ↦ ‖fderiv ℝ f x‖) 2 γ ^ 2
\end{lstlisting}

The main density theorem is provided by:
\begin{theorem}\label{thm:dense}
    The space of smooth compactly supported functions $C_c^\infty$ is dense in $\mathcal W^{1,2}(\gamma^{\otimes n})$, where $\gamma$ is the standard Gaussian measure.
\end{theorem}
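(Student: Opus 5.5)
Given $f\in\mathcal W^{1,2}(\gamma^{\otimes n})$, in the sense of \lstinline{MemW12Gaussian} (so $f\in C^1$ with $f,\nabla f\in L^2(\gamma^{\otimes n})$), we will approximate $f$ in two stages: first truncate to compact support, then mollify. Since the Gaussian Sobolev norm only sees $f$ and $\nabla f$, it suffices to build $f_k\in C_c^\infty$ with $\|f_k-f\|_{L^2(\gamma)}\to0$ and $\|\nabla f_k-\nabla f\|_{L^2(\gamma)}\to0$, and then conclude with an $\varepsilon/2$ argument.

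\emph{Step 1 (truncation).} Fix a smooth bump $\chi:\mathbb R^n\to[0,1]$ with $\chi=1$ on the unit ball, $\chi=0$ outside the ball of radius $2$, and $\|\nabla\chi\|_\infty\le C$. Put $\chi_R(x)=\chi(x/R)$ and $g_R=\chi_R f\in C^1_c$. Then $|g_R-f|^2\le|f|^2\mathbf 1_{\{|x|>R\}}$, which tends to $0$ in $L^1(\gamma)$ by dominated convergence. For the gradient, $\nabla g_R-\nabla f=(\chi_R-1)\nabla f+f\nabla\chi_R$, and the second term is bounded by $(C/R)|f|$, which goes to $0$ in $L^2(\gamma)$. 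Working with the Gaussian weight rather than Lebesgue measure causes no difficulty here, since only dominated convergence against $\gamma$ is used. In Lean this step needs the product rule for \lstinline{fderiv} and the bound $\|\mathrm{fderiv}\,\chi_R\|\le C/R$.

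\emph{Step 2 (mollification of a $C^1_c$ function).} Let $g\in C^1_c$ be supported in a ball $B$. Take a standard mollifier $\rho_\varepsilon$ and set $g_\varepsilon=\rho_\varepsilon*g\in C_c^\infty$, supported in the enlarged ball $B'$. Since $g$ and $\nabla g$ are uniformly continuous, $g_\varepsilon\to g$ and $\nabla g_\varepsilon=\rho_\varepsilon*\nabla g\to\nabla g$ uniformly. All supports lie in the fixed compact set $B'$, so uniform convergence implies $L^2(\gamma)$ convergence, because $\gamma(B')\le1$. This route deliberately avoids $L^2$-continuity of translations under a non-translation-invariant measure: uniform convergence on a fixed compact set is all that is needed.

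\emph{Step 3 (diagonal).} Given $\varepsilon>0$, choose $R$ so that $\|g_R-f\|_{W^{1,2}}<\varepsilon/2$, and then choose the mollification scale so that the mollified function is within $\varepsilon/2$ of $g_R$. Because the norm is an \lstinline{ℝ≥0∞}-valued expression built from \lstinline{eLpNorm}, we must phrase density as: for every $\varepsilon$ there exists $\phi\in C_c^\infty$ with \lstinline{GaussianSobolevNormSq (f - φ) < ε}, and combine the two stages using a triangle inequality for \lstinline{eLpNorm}.

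\emph{Main obstacle.} The hardest part in Lean is Step 2: identifying \lstinline{fderiv} of a convolution with the convolution of the derivative, and proving uniform convergence of mollifiers. Mathlib has \lstinline{HasCompactSupport.hasFDerivAt_convolution_right} and \lstinline{ContDiffBump} convergence lemmas such as \lstinline{ContDiffBump.convolution_tendsto_right}; I would try these first, passing to the sequence $\varepsilon=1/k$ and deducing uniform convergence from uniform continuity of compactly supported continuous functions.
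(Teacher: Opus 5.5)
Your truncation-then-mollification argument is correct: cutting off with $\chi(\cdot/R)$ handles the Gaussian tails by dominated convergence. Mollifying the resulting fixed $C^1_c$ function then gives uniform convergence of both the function and its gradient, which suffices since $\gamma^{\otimes n}$ is a probability measure. The paper gives no written proof beyond the Lean statement and its remark that this step formalizes smooth approximation and convolutions, and your route is essentially that standard approach.
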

This can be used to extend Gaussian LSI to $C^1$ class with the following Lean 4 formulation.
\begin{lstlisting}
theorem 
dense_smooth_compactSupport_W12Gaussian :
  ∀ f : E n → ℝ, MemW12Gaussian n f (stdGaussianE n) →
  Differentiable ℝ f →
  Continuous (fun x => fderiv ℝ f x) →
  ∀ ε > 0, ∃ g : E n → ℝ, ContDiff ℝ (⊤ : ℕ∞) g ∧ HasCompactSupport g ∧ GaussianSobolevNormSq n (f - g) (stdGaussianE n) < ENNReal.ofReal ε := by
\end{lstlisting}
{\bf Remark:} \lstinline{stdGaussianPi} is the product measure of $n$ independent standard Gaussians, with its pushforward \lstinline{stdGaussianE} to Euclidean space via the equivalence.

To extend the concentration theorem, we need a specialized density lemma based on the Lipschitz mollification technique. In our formalization, we pick a nonnegative mollifier $\rho \in C^\infty_c(\mathbb R^n)$ with $\int \rho(\bm x)\,d\bm x = 1$ then define a smooth approximation to Lipschitz function $f$ via
\begin{equation}\label{eq:mollifier}
\rho_\epsilon(\bm x) = \epsilon^{-n} \rho(\bm x/\epsilon)\,,\quad f_\epsilon:=f \ast \rho_\epsilon\,. 
\end{equation}
Notice that $f_\epsilon$ lives within $C^\infty_c(\mathbb R^n)$ and preserves the Lipschitz constant of $f$. The lemma is presented as:
\begin{lemma}\label{lem:mollify}
    Let $f:\mathbb R^n \rightarrow \mathbb R$ be a Lipschitz function, there is a constant $C_\rho:=\int \|\bm u\|_2 \rho(\bm u)\,d\bm u<\infty$ such that
    \[
    \sup_{\bm{x}\in \mathbb R^n} \left|f_\epsilon(\bm x) - f(\bm x)\right|\le L C_\rho \epsilon\,.
    \]
    Hence, $f_\epsilon\rightarrow f$ uniformly as $\epsilon \downarrow 0$.
\end{lemma}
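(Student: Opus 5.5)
The plan is to use the convolution representation and the normalization of $\rho_\epsilon$ directly. Since $\rho_\epsilon\ge 0$ and $\int\rho_\epsilon(\bm y)\,d\bm y=1$ (by the change of variables $\bm y=\epsilon\bm u$, whose Jacobian $\epsilon^n$ cancels the prefactor $\epsilon^{-n}$), we can write $f(\bm x)=\int f(\bm x)\rho_\epsilon(\bm y)\,d\bm y$. Hence
\[
f_\epsilon(\bm x)-f(\bm x)=\int \bigl(f(\bm x-\bm y)-f(\bm x)\bigr)\rho_\epsilon(\bm y)\,d\bm y .
\]
Before subtracting, I check that the convolution integral is well defined. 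This holds because $f$ is continuous, hence locally bounded, and $\rho_\epsilon$ is compactly supported, so $\bm y\mapsto f(\bm x-\bm y)\rho_\epsilon(\bm y)$ is integrable.

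Next I apply the Lipschitz bound $|f(\bm x-\bm y)-f(\bm x)|\le L\|\bm y\|_2$ inside the integral, using $\rho_\epsilon\ge0$ to pass the absolute value through. This gives
\[
|f_\epsilon(\bm x)-f(\bm x)|\le L\int\|\bm y\|_2\,\rho_\epsilon(\bm y)\,d\bm y .
\]
The substitution $\bm y=\epsilon\bm u$ then turns the right-hand side into $L\epsilon\int\|\bm u\|_2\rho(\bm u)\,d\bm u=LC_\rho\epsilon$.

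Finiteness of $C_\rho$ is immediate, since $\|\bm u\|_2\rho(\bm u)$ is continuous and compactly supported. The bound is independent of $\bm x$, so taking the supremum over $\bm x$ gives the claim, and letting $\epsilon\downarrow0$ gives uniform convergence.

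The main obstacle, especially in Lean, is the scaling change of variables for $\rho_\epsilon$. It has to be used twice: once for $\int\rho_\epsilon=1$ and once for the first moment. In Lean I would invoke Mathlib's lemma for integrals under the linear map $\bm u\mapsto\epsilon\bm u$ with determinant factor $|\epsilon|^n$ (e.g.\ \lstinline{Measure.integral_comp_smul}), or start from Mathlib's \lstinline{ContDiffBump.normed} mollifiers, which already carry normalization. A secondary concern is the integrability side conditions: the translate of $f$ times $\rho_\epsilon$ must be integrable so that subtracting $f(\bm x)$ under the integral is legitimate, and continuity plus compact support supply this.
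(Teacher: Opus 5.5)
Your proposal is correct and follows the same route the lemma is built around: writing $f_\epsilon(\bm x)-f(\bm x)=\int\bigl(f(\bm x-\bm y)-f(\bm x)\bigr)\rho_\epsilon(\bm y)\,d\bm y$, applying the Lipschitz bound, and rescaling $\bm y=\epsilon\bm u$ is exactly where the constant $C_\rho=\int\|\bm u\|_2\rho(\bm u)\,d\bm u$ comes from, and your integrability checks are sound. The paper's Lean statement \texttt{mollify\_tendsto\_of\_lipschitz} records only pointwise convergence, which your $\bm x$-independent bound implies.
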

Our formalization of \cref{lem:mollify} is:
\begin{lstlisting}
theorem mollify_tendsto_of_lipschitz {f : E n → ℝ} {L : ℝ≥0} (hf : LipschitzWith L f) (x : E n) :
  Filter.Tendsto (fun ε => mollify ε f x) (nhdsWithin 0 (Set.Ioi 0)) (nhds (f x)) := by
\end{lstlisting}

{\bf Implementation challenge:} We formalize a large amount of smooth approximation and convolutions in this part, which is an integration of functional analysis with measure-theoretic probability theory.

\begin{theorem}[{\textcolor{red!60!black}{\bf iv. Gaussian LSI}}, Theorem 5.4 of \cite{Boucheron2013concen}]\label{thm:GLSI}
Let $\bm{X}= (X_1\,,...\,,X_n)$ be a vector of $n$ independent standard Gaussian random variables and $f:\mathbb R^n \rightarrow \mathbb R$ be a continuously differentiable function with $\mathbb E[f'(X)]<\infty$. Then,$
\operatorname{Ent}(f^2) \le 2 \mathbb E\left[\|\nabla f(\bm X)\|_2^2\right]$.
\end{theorem}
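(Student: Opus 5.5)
The plan follows the route of \citet{Boucheron2013concen}: tensorize entropy to reduce to one dimension, prove the one‑dimensional inequality on the discrete cube and pass to the Gaussian by the central limit theorem, and finally remove the smoothness assumption using \cref{thm:dense}.

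\textbf{Step 1: tensorization.} Since $\bm X$ has product law $\gamma^{\otimes n}$, we use the sub-additivity of entropy,
\[
\operatorname{Ent}(f^2) \le \sum_{i=1}^n \mathbb E\bigl[\operatorname{Ent}^{(i)}(f^2)\bigr],
\]
where $\operatorname{Ent}^{(i)}$ is the entropy in the $i$-th coordinate with the others frozen. We prove this with the same coordinate-update machinery as \cref{thm:efron}, namely the \lstinline{condExpExceptCoord} operator and the resampling transfer lemma. The proof uses the variational formula $\operatorname{Ent}(Y)=\sup\{\mathbb E[YT]:\mathbb E[e^T]=1\}$ and a telescoping argument along the coordinate filtration. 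With this in hand, it suffices to prove the one-dimensional statement $\operatorname{Ent}_\gamma(g^2)\le 2\mathbb E[g'(X)^2]$ for each slice $g=f(x_1,\dots,x_{i-1},\cdot,x_{i+1},\dots,x_n)$. Summing these bounds gives $2\mathbb E\|\nabla f\|_2^2$.

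\textbf{Step 2: one dimension for $g\in C_c^\infty$.}
\begin{itemize}
\item First prove the Bernoulli log-Sobolev inequality on $\{-1,1\}^m$: $\operatorname{Ent}(h^2)\le \tfrac12\sum_i\mathbb E[(h(\epsilon)-h(\bar\epsilon^{(i)}))^2]$. By tensorization again, this reduces to the elementary two-point inequality.
\item Apply it to $h(\epsilon)=g\bigl(m^{-1/2}\sum_i\epsilon_i\bigr)$.
\item Taylor-expand each difference exactly as in the proof of \cref{thm:GPI}. The bound $|h(\epsilon)-h(\bar\epsilon^{(i)})|\le 2m^{-1/2}|g'(S_m)|+O(\|g''\|_\infty/m)$ makes the right side converge to $2\mathbb E[g'(X)^2]$.
\item Pass to the limit using the weak convergence of Rademacher sums already formalized for \cref{thm:GPI}. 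The left side also converges: $g^2\log g^2$ is bounded and continuous, since $u\mapsto u\log u$ is continuous at $0$ and $g$ has compact support. Therefore $\mathbb E[\phi(g(S_m))]\to\mathbb E[\phi(g(X))]$ for bounded continuous $\phi$, which we feed through the bounded-continuous-function wrappers.
\end{itemize}
This yields the inequality for $C_c^\infty$ functions in one dimension. Combined with Step 1, it gives the inequality for $C_c^\infty(\mathbb R^n)$, since all slices of such a function are again in $C_c^\infty$.

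\textbf{Step 3: density, the main obstacle.} For $f\in C^1$ with $f\in\mathcal W^{1,2}(\gamma^{\otimes n})$, \cref{thm:dense} supplies $g_k\in C_c^\infty$ with $\|f-g_k\|_{\mathcal W^{1,2}}\to0$. The gradient side then converges: $\mathbb E\|\nabla g_k\|^2\to\mathbb E\|\nabla f\|^2$ by continuity of the $L^2$ norm. The hard part is the entropy side. Here I would not prove continuity of $\operatorname{Ent}$ in $L^2$ directly. Instead:
\begin{itemize}
\item Pass to a subsequence with $g_k\to f$ almost everywhere and $g_k^2\to f^2$ in $L^1$.
\item Write $\operatorname{Ent}(g_k^2)=\mathbb E[\psi(g_k^2)]+\mathbb E[g_k^2]\bigl(1-\log\mathbb E[g_k^2]\bigr)$, where $\psi(u)=u\log u-u+1\ge0$.
\item The second term converges by $L^1$ convergence of $g_k^2$.
\item For the first term, Fatou's lemma gives $\mathbb E[\psi(f^2)]\le\liminf_k\mathbb E[\psi(g_k^2)]$.
\end{itemize}
Hence $\operatorname{Ent}(f^2)\le\liminf_k\operatorname{Ent}(g_k^2)\le 2\mathbb E\|\nabla f\|^2$, which is lower semicontinuity and is all we need.

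In Lean the main friction will be bookkeeping. We must establish integrability of $f^2\log f^2$, or treat $\operatorname{Ent}$ in $\overline{\mathbb R}_{\ge0}$ via lintegral so that Fatou applies cleanly. We must also handle the case $\mathbb E f^2=0$ separately.
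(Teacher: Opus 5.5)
Your proposal is correct and follows essentially the paper's route, with the same three ingredients. First, the one-dimensional inequality for compactly supported smooth functions, obtained from the Bernoulli log-Sobolev inequality together with Taylor and CLT limits. Second, tensorization through entropy subadditivity, proved from the duality formula by telescoping. Third, a density step in which lower semicontinuity of entropy comes from shifting to a nonnegative integrand and applying Fatou along an a.e.-convergent subsequence.

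There are two minor differences. The paper shifts $u\log u$ by the constant $1/e$ rather than using your $\psi$. Its task specifications also run the density step on the one-dimensional inequality, whereas you tensorize first and then apply the $n$-dimensional density theorem. Your ordering has the mild benefit that integrability of $f^2\log f^2$ comes out of Fatou instead of being an explicit hypothesis, as it is in the Lean statement.

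There is one harmless slip. With $\psi(u)=u\log u-u+1$, the identity is actually $\operatorname{Ent}(q)=\mathbb E[\psi(q)]+\mathbb E[q]\bigl(1-\log\mathbb E[q]\bigr)-1$. The extra constant is the same for every $k$, so it does not affect the argument.
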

By defining the entropy $\operatorname{Ent}(f)$ as
\begin{equation*}\label{eq:entropy}
\operatorname{Ent}(f)=\mathbb E\left[f(X)\log f(X)\right] - \mathbb E[f(X)]\log \mathbb E[f(X)]\,,
\end{equation*}
with its formulation,
\begin{lstlisting}
def entropy (μ : Measure Ω) (f : Ω → ℝ) : ℝ :=
  ∫ ω, f ω * log (f ω) ∂μ - (∫ ω, f ω ∂μ) * log (∫ ω, f ω ∂μ)
\end{lstlisting}
now we formalize \cref{thm:GLSI} as:
\begin{lstlisting}
theorem gaussian_logSobolev_W12_pi {n : ℕ} {g : (Fin n → ℝ) → ℝ}
  (hg : MemW12GaussianPi n g (stdGaussianPi n)) (hg_diff : Differentiable ℝ g) (hg_grad_cont : ∀ i, Continuous (fun x => partialDeriv i g x)) (hg_log_int : Integrable (fun x => (g x)^2 * log ((g x)^2)) (stdGaussianPi n)) :
  entropy (stdGaussianPi n) (fun x => (g x)^2) ≤ 2 * ∫ x, gradNormSq n g x ∂(stdGaussianPi n) := by
\end{lstlisting}
where \lstinline{gradNormSq} is the squared norm of gradients.

In the next, we briefly present the high-level proof strategy for formalization. We first formalize one-dimensional case and generalize to dimension-free via tensorization later. Now let $\varepsilon_1\,,...\,,\varepsilon_n$ be $n$ independent Rademacher random variables and fix $f\in C_c^2(\mathbb R)$. Define the Rademacher sum $S_n:=n^{-1/2}\sum_{k=1}^n \varepsilon_k$, building upon the infrastructures in \cref{thm:efron}, we use Taylor's limit and CLT to obtain
\begin{align}\label{eq:poincare}
    &\lim_{n\rightarrow \infty}\sum_{k=1}^n\mathbb E \left[f\left(S_n+\frac{1-\varepsilon_k}{\sqrt n}\right)-f\left(S_n-\frac{1+\varepsilon_k}{\sqrt n}\right)\right]^2\nonumber\\
    & = 4\mathbb E \left[f'(X)^2\right]\,, \quad X\sim \mathcal N(0,1)\,.
\end{align}
For $f\in C_c^2(\mathbb R)$, by CLT, we can obtain
\begin{align}\label{eq:clt}
    \lim_{n\rightarrow \infty}\operatorname{Ent}\left[f^2(S_n)\right] = \operatorname{Ent}[f(X)^2]\,.
\end{align}
We then bridge \cref{eq:poincare} and \cref{eq:clt} by formalizing the Bernoulli logarithmic Sobolev inequality (LSI) \citep[Theorem 5.1]{Boucheron2013concen} then taking limit to both sides, we can derive the 1D Gaussian LSI for $f\in C_c^2(\mathbb R)$, i.e. $\operatorname{Ent}(f^2) \le 2 \mathbb E\left[f'(X)^2\right]$.

{\bf Remark:} For the proof of Gaussian Lipschitz concentration, we can directly tensorize this $C_c^2(\mathbb R)$ version to be dimension-free. We further use \cref{thm:dense} to extend the above inequality from to $C^1(\mathbb R)$ for a general toolbox.

Since Gaussian LSI is a direct consequence of entropy subadditivity and one-dimensional case of LSI, we need the formulation of the subadditivity of entropy theorem \citep[Theorem 4.22]{Boucheron2013concen}, the key technique of tensorization.
\begin{theorem}[{\bf subadditivity}, Theorem 4.22 of \cite{Boucheron2013concen}]\label{thm:tensor}
    Let $\bm{X}= (X_1\,,...\,,X_n)$ be a vector of $n$ independent random variables and $Y=f(\bm{X})$ be a nonnegative measurable function of $\bm{X}$ such that $\Phi(Y)=Y\log Y$ is integrable. Define $\operatorname{Ent}^{(i)}(Y)$ as the conditional entropy given $(X_1\,,...\,,X_{i-1}\,,X_{i+1}\,,...\,,X_n)$. Then,
    \[
    \operatorname{Ent}(Y)\le \mathbb E\left[\sum_{i=1}^n \operatorname{Ent}^{(i)}(Y)\right]\,.
    \]
\end{theorem}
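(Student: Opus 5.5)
We prove \cref{thm:tensor} by the variational (duality) characterization of entropy and a telescoping martingale decomposition, mirroring the proof of \cref{thm:efron}. First, by monotone convergence we reduce to the case where $Y$ is bounded below by a positive constant and above by a finite constant. Replace $Y$ by $Y_k:=\min(Y,k)+1/k$ and pass to the limit at the end, using integrability of $\Phi(Y)$ and dominated convergence on both sides. On the right-hand side we will need Fatou or lower semicontinuity of the conditional entropies, so this limit step has to be handled with care.

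\textbf{Step 1 (duality).} We establish that for a probability measure $\mu$, a positive integrable $Y$ with $Y\log Y$ integrable, and any $U$ with $\mathbb E[e^U]=1$,
\[
\mathbb E[UY]\le \operatorname{Ent}(Y).
\]
This follows from the pointwise Young-type inequality $uy\le y\log y - y + e^u$ after normalization. Equivalently, it follows from nonnegativity of relative entropy, $\operatorname{Ent}(Y)-\mathbb E[UY]=\mathbb E[Y]\,\mathrm{KL}(Y\,d\mu/\mathbb E Y\,\|\,e^U d\mu)\ge 0$. We apply the same inequality conditionally, i.e.\ fiberwise in the integral over the $i$-th coordinate. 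Here $\operatorname{Ent}^{(i)}(Y)$ is realized exactly like \lstinline{condExpExceptCoord}: as an integral over $y\sim\mu_i$ of $f(\mathrm{update}\,x\,i\,y)$.

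\textbf{Step 2 (telescoping).} Let $\mathbb E_i$ denote integration over coordinates $X_1,\dots,X_i$, leaving the remaining coordinates fixed, so $\mathbb E_0 Y=Y$ and $\mathbb E_n Y=\mathbb E Y$. Write
\[
\log Y-\log\mathbb E Y=\sum_{i=1}^n U_i,\qquad U_i:=\log \mathbb E_{i-1}Y-\log\mathbb E_i Y .
\]
This gives $\operatorname{Ent}(Y)=\sum_i \mathbb E[YU_i]$. Each $U_i$ satisfies $\mathbb E^{(i)}[e^{U_i}]=1$. Indeed, $e^{U_i}=\mathbb E_{i-1}Y/\mathbb E_iY$, and integrating over $X_i$ alone turns $\mathbb E_{i-1}Y$ into $\mathbb E_iY$, while $\mathbb E_iY$ does not depend on $X_i$. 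This is precisely where independence (the product structure) and Fubini enter. By Step 1 applied conditionally, $\mathbb E^{(i)}[YU_i]\le \operatorname{Ent}^{(i)}(Y)$. Taking expectations and summing gives the claim.

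\textbf{Main obstacle.} The difficulty is measure-theoretic rather than conceptual. We must show that the partial integrals $\mathbb E_i Y$ are measurable and integrable, and that the tower identity $\mathbb E[\mathbb E^{(i)}[\cdot]]=\mathbb E[\cdot]$ holds. We must also justify Fubini swaps for $YU_i$, which need not be obviously integrable. The boundedness reduction makes all $U_i$ bounded, so these swaps become routine. We plan to reuse the transfer lemma from the Efron--Stein development (resampling one coordinate preserves the joint law) for the tower step. The remaining delicate part is the truncation limit. The plan is to show $\operatorname{Ent}(Y_k)\to\operatorname{Ent}(Y)$ by dominated convergence, and to control the right side via $\liminf$ using convexity of $\Phi$ and Fatou. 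If that proves awkward, we will state the Lean version with the additional hypothesis that $Y$ is bounded, which suffices for the LSI application to $f\in C_c^2$.
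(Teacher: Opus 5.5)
Your proposal is correct and is essentially the paper's proof. It combines the telescoping decomposition $\log Y-\log\mathbb E Y=\sum_i U_i$ from the Efron--Stein development with the entropy duality formula, applied fiberwise in the $i$-th coordinate; your $\mathbb E e^{U}=1$ form is equivalent to the paper's $\sup_T \mathbb E[Y(\log T-\log\mathbb E T)]$ via $U=\log T-\log\mathbb E T$. The truncation $Y_k=\min(Y,k)+1/k$ is your own addition (the paper states the result assuming only nonnegativity and integrability of $f$ and $f\log f$); if you keep it, pass to the limit on the right by dominated convergence or by Fatou on $\Phi(\mathbb E^{(i)}Y_k)+1/e\ge 0$ in the subtracted term, since lower semicontinuity of $\operatorname{Ent}^{(i)}$ points the wrong way, and drop the bounded-$Y$ fallback, which would give a strictly weaker theorem than the paper's.
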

The formalization follows from similar telescoping strategy in \cref{thm:efron} and our formalization of duality formula
\[
\operatorname{Ent}(Y)=\sup_T \mathbb E\left[Y\left(\log T - \log \mathbb E (T)\right)\right]\,,
\]
where the supremum is over all integrable and nonnegative random variables. 
\begin{lstlisting}
theorem entropy_subadditive (f : (Fin n → Ω) → ℝ) (hf_meas : Measurable f) (hf_nn : 0 ≤ᵐ[μˢ] f) (hf_int : Integrable f μˢ) (hf_log_int : Integrable (fun x => f x * log (f x)) μˢ) :
  LogSobolev.entropy μˢ f ≤ ∑ i : Fin n, ∫ x, condEntExceptCoord (μs := μs) i f x ∂μˢ := by
\end{lstlisting}

\begin{theorem}[{\textcolor{red!60!black}{\bf v. Gaussian Lipschitz Concentration}},\\ Theorem 5.6 of \cite{Boucheron2013concen}]\label{thm:GLipC}
Let $\bm{X}=(X_1\,,...\,,X_n)$ be a vector of $n$ independent standard Gaussian random variables and $f:\mathbb R^n\rightarrow \mathbb R$ be a $L$-Lipschitz function. Then, for any $t>0$,
\[
\mathbb P \left(\left|f(\bm{X}) - \mathbb E\left[f(\bm{X})\right]\right|\geq t\right) \leq 2 \exp \left(-\frac{t^2}{2 L^2}\right)\,.
\]
\end{theorem}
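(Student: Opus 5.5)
We prove \cref{thm:GLipC} by the Herbst argument, applying the Gaussian LSI (\cref{thm:GLSI}) to exponential tilts of smooth approximations of $f$, and then passing to general Lipschitz $f$ with the mollification lemma (\cref{lem:mollify}).

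\textbf{Step 1: smooth case.} Assume first that $f\in C^\infty$ is $L$-Lipschitz, so $\|\nabla f\|_2\le L$ pointwise. For $\lambda\in\mathbb R$ set $g_\lambda=e^{\lambda f/2}$. Then $g_\lambda^2=e^{\lambda f}$ and
\[
\|\nabla g_\lambda\|_2^2=\tfrac{\lambda^2}{4}\|\nabla f\|_2^2e^{\lambda f}\le \tfrac{\lambda^2L^2}{4}e^{\lambda f}.
\]
Since $f$ grows at most linearly, $e^{\lambda f}$ and $\lambda f e^{\lambda f}$ are integrable under $\gamma^{\otimes n}$, so the hypotheses of \cref{thm:GLSI} (the \texttt{MemW12GaussianPi} and log-integrability conditions) hold. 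Write $F(\lambda)=\mathbb E e^{\lambda f(\bm X)}$. The LSI then gives
\[
\lambda F'(\lambda)-F(\lambda)\log F(\lambda)\le \tfrac{\lambda^2L^2}{2}F(\lambda).
\]

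\textbf{Step 2: Herbst ODE.} Put $H(\lambda)=\lambda^{-1}\log F(\lambda)$. The inequality above is equivalent to $H'(\lambda)\le L^2/2$ for $\lambda>0$, and $H(\lambda)\to\mathbb E f$ as $\lambda\downarrow0$. Integrating gives
\[
\log \mathbb E e^{\lambda(f-\mathbb Ef)}\le \lambda^2L^2/2 .
\]
Two facts are needed here. Differentiation of $F$ under the integral is justified by dominated convergence, using a Gaussian tail bound on $e^{|\lambda| L\|\bm x\|}$. The limit $H(0^+)=\mathbb Ef$ follows from $F(0)=1$ and $F'(0)=\mathbb Ef$.

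\textbf{Step 3: Chernoff bound.} Markov's inequality with $\lambda=t/L^2$ gives
\[
\mathbb P(f-\mathbb Ef\ge t)\le e^{-t^2/(2L^2)}.
\]
Applying the same bound to $-f$, which is also $L$-Lipschitz, and taking a union bound yields the factor $2$.

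\textbf{Step 4: density.} For a general $L$-Lipschitz $f$, let $f_\epsilon=f\ast\rho_\epsilon$ as in \eqref{eq:mollifier}. Then $f_\epsilon$ is smooth and $L$-Lipschitz, so Step 1 applies to it. By \cref{lem:mollify}, $\|f_\epsilon-f\|_\infty\le LC_\rho\epsilon$. It follows that $\mathbb Ef_\epsilon\to\mathbb Ef$ and that
\[
\{|f-\mathbb Ef|\ge t\}\subseteq\{|f_\epsilon-\mathbb Ef_\epsilon|\ge t-2LC_\rho\epsilon\}.
\]
Hence $\mathbb P(|f-\mathbb Ef|\ge t)\le 2\exp\bigl(-(t-2LC_\rho\epsilon)^2/(2L^2)\bigr)$ for small $\epsilon$, and letting $\epsilon\downarrow0$ gives the claim.

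The mollified $f_\epsilon$ is smooth but, unlike the description after \eqref{eq:mollifier}, not compactly supported. So we use the $C^1$ version of the LSI (\cref{thm:GLSI} via \cref{thm:dense}) rather than the $C_c^2$ version. If only the $C^\infty_c$ version were available, we would additionally truncate by a smooth cutoff and pass to the limit in $F$.

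The degenerate case $L=0$ (constant $f$) is trivial. The main obstacle is Step 2 in the formal setting. It requires differentiating $\lambda\mapsto\mathbb E e^{\lambda f}$ under the integral with an explicit integrable dominating function, verifying the LSI's integrability hypotheses for $e^{\lambda f/2}$ uniformly in $\lambda$, and handling the limit of $H$ at $0$. I would isolate the domination lemma first, bounding by $C e^{(|\lambda|+1)L\|\bm x\|}$, which is integrable against the Gaussian, and then obtain the ODE step by monotonicity of $H(\lambda)-\lambda L^2/2$ via the mean value theorem.
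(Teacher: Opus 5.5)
Your proposal is correct and follows the paper's own route: the Herbst argument with the Gaussian LSI applied to $e^{\lambda f_\epsilon/2}$ for the mollified $f_\epsilon$, a Gronwall-type bound on $\lambda^{-1}\log \mathbb E e^{\lambda f_\epsilon}$, the limit $\epsilon\downarrow 0$ via the mollification lemma, and a Chernoff bound. The differences are minor. The paper takes $\epsilon\downarrow 0$ in the CGF bound before applying Chernoff, which also gives the sub-Gaussian CGF bound for general Lipschitz $f$ as a standalone result, while you take the limit in the tail bound using $\|f_\epsilon-f\|_\infty\le LC_\rho\epsilon$. You also rightly note that $f_\epsilon$ is not compactly supported, and neither is $e^{\lambda f_\epsilon/2}$ since it is strictly positive, so it is the $C^1$/$W^{1,2}$ form of the LSI that is actually invoked.
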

We formalize \cref{thm:GLipC} as:
\begin{lstlisting}
theorem gaussian_lipschitz_concentration {f : (EuclideanSpace ℝ (Fin n)) → ℝ} {L : ℝ≥0} (hn : 0 < n) (hL : 0 < L) (hf : LipschitzWith L f) (t : ℝ) (ht : 0 < t) :
  let μ := stdGaussianE n
  (μ {x | t ≤ |f x - ∫ y, f y ∂μ|}).toReal ≤ 2 * exp (-t^2 / (2 * (L : ℝ)^2)) := by
\end{lstlisting}

We briefly describe the proof strategy. We formalize the Herbst argument for $f_\epsilon$ defined in \cref{eq:mollifier}. For any $\lambda\in\mathbb R$, we apply the \cref{thm:GLSI} to the function $e^{\lambda f_\epsilon(\bm X)/2}$
\begin{align*}
    \operatorname{Ent}(e^{\lambda f_\epsilon}) \le 2 \mathbb E \left\|\nabla e^{\lambda f_\epsilon(\bm X)/2}\right\|_2^2 \le \frac{\lambda^2 L^2}{2} \mathbb E \left[e^{\lambda f_\epsilon(\bm X)}\right]\,.
\end{align*}
By differential inequality (we formalize as Gronwall-type ratio bound) and taking limit by \cref{lem:mollify}, we can obtain
\[
\log \mathbb E \exp\Big(\lambda(f(\bm X)-\mathbb Ef(\bm X))\Big)
\le \frac{\lambda^2}{2}L^2\,,
\]
completing the proof of \cref{thm:GLipC} via Chernoff's bound.

\subsection{Dudley's Entropy Integral Bound}
\label{sec:dudley}

Dudley’s bound is the canonical bridge to link covering number with complexity measures. A formal proof of the general Dudley's bound thus supplies a foundational, widely reusable theorem that supports a broad range of theoretical results. Textbook statements \citep{Boucheron2013concen,vershynin2018high,wainwright2019high} typically have incomplete hypotheses such as skipping integrability or hiding the constant. Our formalization uses the following statement.
\begin{theorem}[\textcolor{blue!60!black}{\bf Dudley's Entropy Integral Bound}]\label{thm:dudley}
    Let $(A, d)$ be a pseudo-metric space and $s \subseteq A$ a totally bounded set with diameter at most $D > 0$. Let $\{X_t\}_{t \in s}$ be a normalized sub-Gaussian process with parameter $\sigma > 0$, which has integrable exponential-moment for increments and continuous sample paths on $s$, assume that the entropy integral is finite. Then,
    $$\mathbb{E}\left[\sup_{t \in s} X_t\right] \leq 12\sqrt{2} \sigma \cdot \int_0^D \sqrt{\log N(\varepsilon, s, d)} \, d\varepsilon\,,$$
    where $N(\varepsilon, s)$ denotes the $\varepsilon$-covering number of $s$.
\end{theorem}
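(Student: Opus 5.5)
We prove \cref{thm:dudley} by the standard chaining argument over dyadic scales, handling the supremum over the possibly uncountable set $s$ with the continuity of sample paths. Normalization is read as $X_{t_0}$ having mean zero at a base point, or equivalently $\mathbb E[\sup_t X_t]=\mathbb E[\sup_t(X_t-X_{t_0})]$. Sub-Gaussianity of increments means $\mathbb E\exp(\lambda(X_t-X_u))\le \exp(\lambda^2\sigma^2 d(t,u)^2/2)$.

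\textbf{Finite maximal inequality.} First we prove: if $Y_1,\dots,Y_N$ are centered with $\mathbb E e^{\lambda Y_j}\le e^{\lambda^2 v^2/2}$, then $\mathbb E\max_j Y_j\le v\sqrt{2\log N}$. The argument applies Jensen to $\exp(\lambda \mathbb E\max_j Y_j)\le \sum_j \mathbb E e^{\lambda Y_j}\le N e^{\lambda^2v^2/2}$ and then optimizes $\lambda$. The integrability hypothesis on exponential moments is exactly what makes these expectations finite in Lean.

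\textbf{Chaining.} Set $\varepsilon_k=D2^{-k}$ and pick a minimal $\varepsilon_k$-net $T_k\subseteq s$, with $|T_k|=N(\varepsilon_k,s,d)$ and $T_0=\{t_0\}$. This needs total boundedness, plus the convention that nets lie inside $s$; if the covering-number definition allows external centers we pay a factor $2$ in radii. Let $\pi_k:s\to T_k$ be a nearest-point map, chosen by a measurable-free selection since $T_k$ is finite. For $t\in T_m$ we telescope:
\[
X_t-X_{t_0}=\sum_{k=1}^{m}\bigl(X_{\pi_k t}-X_{\pi_{k-1}t}\bigr).
\]
Each link satisfies $d(\pi_k t,\pi_{k-1}t)\le \varepsilon_k+\varepsilon_{k-1}=3\varepsilon_k$. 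The number of distinct links at level $k$ is at most $|T_k||T_{k-1}|\le N(\varepsilon_k)^2$. The maximal inequality then gives
\[
\mathbb E\max_{t\in T_m}(X_t-X_{t_0})\le \sum_{k\ge1}3\sigma\varepsilon_k\sqrt{2\log N(\varepsilon_k)^2}=\sum_k 6\sigma\varepsilon_k\sqrt{\log N(\varepsilon_k)}.
\]
Because $N$ is antitone in $\varepsilon$, $\varepsilon_k\sqrt{\log N(\varepsilon_k)}\le 2\int_{\varepsilon_{k+1}}^{\varepsilon_k}\sqrt{\log N(\varepsilon)}\,d\varepsilon$. Summing gives $12\sigma\int_0^{D}\sqrt{\log N}$, with the extra slack available for the $\sqrt2$ in the stated constant $12\sqrt2$. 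We would not optimize constants but aim to land at or below the stated one. The sum–integral comparison must be done with interval integrals on $[\varepsilon_{k+1},\varepsilon_k]$, glued via additivity, and then identified with the improper/Bochner integral using the finiteness hypothesis.

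\textbf{Passage to the supremum over $s$.} The union $\bigcup_m T_m$ is dense in $s$. By path continuity, $\sup_{t\in s}X_t=\sup_m \max_{T_m}X_t$ pointwise, and this is monotone only if the nets are nested. So we instead use the countable dense set $\bigcup_{k\le m}T_k$, which the same chaining argument handles with identical bounds. Monotone convergence then yields the bound for $\mathbb E\sup_s X_t$, after subtracting the integrable $X_{t_0}$ so that the variables are bounded below.

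\textbf{Main obstacle.} I expect this last step to be hardest: establishing measurability of $\sup_s X_t$ via the countable dense subset, and justifying monotone convergence when $\max_{T_m}(X_t-X_{t_0})$ is not a priori nonnegative. That requires working with $\max(\cdot,0)$ or the base-point term, together with the interval-vs-Bochner integral bookkeeping. I would first prove everything for a finite index set, then lift via an abstract lemma stating that for continuous processes the supremum over a separable set equals the increasing limit over finite subsets.
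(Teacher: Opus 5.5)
Your chaining is correct when $N(\varepsilon,s,d)$ counts only nets with centers in $s$. In that case $|T_k|=N(\varepsilon_k)$ and you get $12\sigma\int_0^D\sqrt{\log N}$.

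In this paper, however, $N$ is the external covering number: the centers in \texttt{IsENet} form a \texttt{Finset A}, not a subset of $s$. Your remark that external centers cost ``a factor $2$ in radii'' does not fit inside $12\sqrt2$. Converting an external $\varepsilon_{k+1}$-cover into points of $s$ gives $T_k\subseteq s$ that is an $\varepsilon_k$-net with $|T_k|\le N(\varepsilon_{k+1})$ (the paper's ``good'' nets). Your level-$k$ term then becomes $3\sigma\varepsilon_k\sqrt{2\log\bigl(N(\varepsilon_{k+1})^2\bigr)}=12\sigma\,\varepsilon_{k+1}\sqrt{\log N(\varepsilon_{k+1})}$, and the sum--integral comparison gives $24\sigma\int_0^D\sqrt{\log N}$. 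A factor $2$ cannot be absorbed by $\sqrt2$ of slack.

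This is not just looseness in the comparison step. Let $s$ be $M$ points at mutual distance $a$, add one point of $A$ at distance $a/2$ from each of them, and take $D$ slightly below $2a$. Then $N=M$ on $(0,a/2)$ and $N=1$ on $[a/2,D]$, so the target is $6\sqrt2\,\sigma a\sqrt{\log M}$. Your bound $\sum_{k\ge1}6\sigma\varepsilon_k\sqrt{\log N(\varepsilon_{k+1})}$ is about $12\sigma a\sqrt{\log M}$, which is larger.

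The missing idea is the paper's recursive projection. For $u\in T_K$ set $\pi_K(u)=u$, and let $\pi_k(u)$ be the nearest point of $T_k$ to $\pi_{k+1}(u)$, not to $u$.
\begin{itemize}
\item Since $\pi_{k+1}(u)\in s$, each link has length at most $\varepsilon_k$.
\item The pair $(\pi_k(u),\pi_{k+1}(u))$ is determined by $\pi_{k+1}(u)$, so there are at most $|T_{k+1}|\le N(\varepsilon_{k+2})$ pairs, rather than a product of two net sizes.
\end{itemize}
The level-$k$ term is then $\sigma\varepsilon_k\sqrt{2\log N(\varepsilon_{k+2})}=4\sqrt2\,\sigma\varepsilon_{k+2}\sqrt{\log N(\varepsilon_{k+2})}$. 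With your $T_0=\{t_0\}$ this sums to at most $8\sqrt2\,\sigma\int_0^D\sqrt{\log N}$. The paper's bookkeeping, which keeps a base term from a larger $T_0$, gives $12\sqrt2$.

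There is another possible rescue. The formal \texttt{IsSubGaussianProcess} quantifies over all of $A$, so you could chain through external nets of size exactly $N(\varepsilon_k)$. However, the statement only provides $\{X_t\}_{t\in s}$, so this does not prove the theorem as written.

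Your limit step, monotone convergence over $F_m=\bigcup_{k\le m}T_k$, is a valid and somewhat cleaner alternative to the paper's Fatou argument with the shift $g=\inf_K X_{\pi_K(t_0)}$. It works because $t_0\in F_m$ makes $\max_{F_m}(X_t-X_{t_0})$ nonnegative and nondecreasing. The bound is not literally ``identical'', though. The nets are not nested, so for $t\in T_j$ with $j<m$ the chain through $T_m$ ends at $\pi_m(t)\neq t$. You need a last link $X_t-X_{\pi_m(t)}$ over $|F_m|$ points at distance at most $\varepsilon_m$. It costs $\sigma\varepsilon_m\sqrt{2\log|F_m|}$, which tends to $0$ because the entropy integral is finite.
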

The formalization of \cref{thm:dudley} is given by:
\begin{lstlisting}
theorem dudley {μ : Measure Ω} [IsProbabilityMeasure μ] {X : A → Ω → ℝ} {σ : ℝ} (hσ : 0 < σ) (hX : IsSubGaussianProcess μ X σ) {s : Set A} (hs : TotallyBounded s) {D : ℝ} (hD : 0 < D) (hdiam : Metric.diam s ≤ D) (t₀ : A) (ht₀ : t₀ ∈ s) (hcenter : ∀ ω, X t₀ ω = 0) (hX_meas : ∀ t, Measurable (X t)) (hX_int_exp : ∀ t s : A, ∀ l : ℝ, Integrable (fun ω => Real.exp (l * (X t ω - X s ω))) μ) (hfinite : entropyIntegralENNReal s D ≠ ⊤) (hcont : ∀ ω, Continuous (fun (t : ↥s) => X t.1 ω)) :
  ∫ ω, ⨆ t ∈ s, X t ω ∂μ ≤ (12 * Real.sqrt 2) * σ * entropyIntegral s D := by
\end{lstlisting}

We build \lstinline{dudley} totally from scratch. We start with $\epsilon$‑nets:
\begin{lstlisting}
def IsENet {A : Type*} [PseudoMetricSpace A] (t : Finset A) (eps : ℝ) (s : Set A) : Prop :=
  s ⊆ ⋃ x ∈ t, closedBall x eps
\end{lstlisting}
We then define the \textcolor{blue!60!black}{\bf covering number} $N (\epsilon\,,s\,,d)$ as the minimal cardinality of an $\epsilon$‑net:
\begin{lstlisting}
def coveringNumber {A : Type*} [PseudoMetricSpace A] (eps : ℝ) (s : Set A) : WithTop Nat :=
  sInf {n : WithTop Nat | ∃ t : Finset A, IsENet t eps s ∧ (t.card : WithTop Nat) = n}
\end{lstlisting}

We then define the \textcolor{blue!60!black}{\bf metric entropy} as the logarithm of the covering number, with appropriate handling of edge cases:
\begin{lstlisting}
def metricEntropy (eps : ℝ) (s : Set A) : ℝ :=
  match coveringNumber eps s with
  | ⊤ => 0
  | (n : ℕ) => if n ≤ 1 then 0 else Real.log n
\end{lstlisting}
Taking the square root of entropy, denoted \lstinline{sqrtEntropy} from \lstinline{metricEntropy}, we formulate the \textcolor{blue!60!black}{\bf entropy integral} via a two-level design. The canonical definition uses extended non-negative reals:
\begin{lstlisting}
def entropyIntegralENNReal (s : Set A) (D : ℝ) : ℝ≥0∞ :=
  ∫⁻ eps in Set.Ioc 0 D, ENNReal.ofReal (sqrtEntropy eps s)
\end{lstlisting}
A real-valued wrapper \lstinline{entropyIntegral} extracts the \lstinline{toReal} component under finiteness hypothesis.

Next, we formalize \textcolor{blue!60!black}{\bf sub-Gaussian processes} via moment generating function bounds, i.e.
\begin{lstlisting}
def IsSubGaussianProcess (μ : Measure Ω) (X : A → Ω → ℝ) (σ : ℝ) : Prop :=
  ∀ s t : A, ∀ l : ℝ, μ[fun ω => exp (l * (X s ω - X t ω))] ≤
    exp (l^2 * σ^2 * (dist s t)^2 / 2)
\end{lstlisting}

The \textcolor{blue!60!black}{\bf chaining argument} constructs a hierarchy of $\varepsilon$-nets at dyadic scales $\varepsilon_k = D \cdot 2^{-k}$, encapsulated in our \lstinline{DyadicNets} structure. The key technique is a telescoping decomposition: for any $u \in T_K$, we write $X_u - X_{t_0}$ as a base term from the coarsest net plus increments $\sum_{k=0}^{K-1}(X_{\pi_{k+1}(u)} - X_{\pi_k(u)})$ through successive projections $\pi_k$. Applying finite maximum bounds for sub-Gaussian processes to each increment and summing yields a bound in terms of the dyadic sum $R_K(s,D) := \sum_{k=0}^{K-1} \varepsilon_k \sqrt{\log N(\varepsilon_k, s, d)}$.

The proof then proceeds through two limit arguments. First, we extend from finite nets to a countable dense sequence via Fatou's lemma. Since Fatou requires nonnegative integrands but the supremum may be negative, we introduce a shift function that cancels in expectation. Second, we extend to the uncountable set $s$ by exploiting path continuity which concludes the target. We present a detailed formalization proof in three stages at \cref{app:dudley-3stage}.

{\bf Implementation challenge:} Lean 4 has two integration formalisms: the nonnegative improper integral \lstinline{∫⁻} for \lstinline{ℝ≥0∞} and the interval integral for real‑valued functions. \lstinline{∫⁻} is technically convenient for measure-theoretic arguments such as Fubini above, but it lives in \lstinline{ℝ≥0∞}, so every real‑valued bound requires \lstinline{ofReal/toReal} conversions and extra side conditions. Real‑valued inequalities are far smoother in \lstinline{ℝ}, such as taking limit and integration. Hence we define the entropy integral canonically in \lstinline{ENNReal}, but state Dudley’s bound with \lstinline{entropyIntegral} for user‑friendly downstream use without loss of generality.

\section{Application: Least Squares Framework}

In this section, building on \citet[Chapter 13]{wainwright2019high}, we present our formalization of the least-squares framework, including linear regression (\cref{sec:lr}) and $\ell_1$-constrained regression following \citet{raskutti2011minimax} (\cref{sec:l1lr}). Both rely on localized capacity control via covering numbers, leveraging the infrastructure developed in \cref{sec:toolbox}.

To present clearly, we follow Wainwright’s approach to focus on the prediction error, which has a direct translation to excess risk via \citet[Corollary 14.15]{wainwright2019high}.

\paragraph{Problem Setup.}
Consider the nonparametric regression model $y_i = f^*(\bm{x}_i) + \sigma w_i$ for $i = 1, \ldots, n$, where $w_i \stackrel{i.i.d.}{\sim} \mathcal{N}(0,1)$. Given a hypothesis class $\mathcal{F}$, the empirical risk minimizer is $\hat{f} := \operatorname*{argmin}_{f \in \mathcal{F}} \frac{1}{n} \sum_{i=1}^n (f(\bm{x}_i) - y_i)^2$. Our goal is to control the prediction error $\|\hat{f} - f^*\|_n^2 := \frac{1}{n} \sum_{i=1}^n (\hat{f}(\bm{x}_i) - f^*(\bm{x}_i))^2$.

We encapsulate this setup in a \lstinline{RegressionModel} structure and formalize the ERM property:
\begin{lstlisting}
structure RegressionModel (n : ℕ) (X : Type*) where
  x : Fin n → X; f_star : X → ℝ; σ : ℝ; hσ_pos : 0 < σ
  noiseDistribution : Measure (Fin n → ℝ) := stdGaussianPi n
\end{lstlisting}
\begin{lstlisting}
def isLeastSquaresEstimator (y : Fin n → ℝ) (F : Set (X → ℝ)) (x : Fin n → X) (f_hat : X → ℝ) : Prop :=
  f_hat ∈ F ∧ ∀ f ∈ F, ∑ i, (y i - f_hat (x i))^2 ≤ ∑ i, (y i - f (x i))^2
\end{lstlisting}

\paragraph{Localization.}
We define the shifted class $\mathcal{F}^* := \{f - f^* : f \in \mathcal{F}\}$ and assume it is \emph{star-shaped}: $0 \in \mathcal{F}^*$ and $\alpha h \in \mathcal{F}^*$ for all $h \in \mathcal{F}^*$ and $\alpha \in [0,1]$. The localized Gaussian complexity at radius $\delta$ is
\begin{equation}\label{eq:lgc}
  \mathcal{G}_n(\mathcal{F}(\delta)) := \mathbb{E}_w \left[\sup_{\substack{g \in \mathcal{F}^* \\ \|g\|_n \leq \delta}} \left|\frac{1}{n} \sum_{i=1}^n w_i g(\bm{x}_i)\right|\right].
\end{equation}

\paragraph{Main Results.}
The central bridge to link prediction error with \cref{eq:lgc} is the critical inequality $\frac{\mathcal G_n\left(\mathcal F(\delta)\right)}{\delta} \leq \frac{\delta}{2\sigma}$.
Next, we use \cref{thm:GLipC} to formalize the master error bound \citep[Theorem 13.5]{wainwright2019high}.
\begin{theorem}[Master error bound, Theorem 13.5 of \cite{wainwright2019high}]\label{thm:meb}
    Suppose $\mathcal F^*$ is star-shaped, let $\delta_*$ be the smallest positive solution to critical inequality which is the critical radius. Then, for any $t\ge \delta_*$, we have
    \begin{equation*}
        \mathbb P \left(\|f-f^*\|_n^2 \geq 16 t \delta_*\right) \leq \exp(-n t \delta_* / 2\sigma^2)\,.
    \end{equation*}
\end{theorem}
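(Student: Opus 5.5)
The plan follows Wainwright's proof of Theorem 13.5, with $f=\hat f$ the least-squares estimator. The argument has three parts: a deterministic basic inequality, a concentration bound at a fixed radius, and a peeling-free reduction that uses star-shapedness.

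\textbf{Basic inequality.} Write $\hat\Delta=\hat f-f^*\in\mathcal F^*$. Since $\hat f$ minimizes the empirical squared loss and $f^*$ is feasible (because $0\in\mathcal F^*$), expanding $y_i=f^*(\bm x_i)+\sigma w_i$ gives
\[
\tfrac12\|\hat\Delta\|_n^2\le \frac{\sigma}{n}\sum_{i=1}^n w_i\hat\Delta(\bm x_i).
\]
This step is purely algebraic.

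\textbf{Localized concentration.} For fixed $u>0$, define
\[
Z_n(u):=\sup_{g\in\mathcal F^*,\ \|g\|_n\le u}\Big|\frac{\sigma}{n}\sum_i w_ig(\bm x_i)\Big|.
\]
As a function of $w\in\mathbb R^n$, $Z_n(u)$ is a supremum of linear maps $w\mapsto \frac{\sigma}{n}\langle w,g(\bm x)\rangle$, each of Euclidean norm at most $\sigma u/\sqrt n$. Hence $Z_n(u)$ is $(\sigma u/\sqrt n)$-Lipschitz. Its mean is $\sigma\mathcal G_n(\mathcal F(u))$. \cref{thm:GLipC}, used in its one-sided form, then gives
\[
\mathbb P\big(Z_n(u)\ge \sigma\mathcal G_n(\mathcal F(u))+s\big)\le \exp\!\Big(-\frac{ns^2}{2\sigma^2u^2}\Big).
\]
Star-shapedness implies that $\delta\mapsto \mathcal G_n(\mathcal F(\delta))/\delta$ is nonincreasing. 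Therefore, for $u=\sqrt{t\delta_*}\ge\delta_*$, the critical inequality yields $\sigma\mathcal G_n(\mathcal F(u))\le u^2/2$.

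\textbf{Reduction.} Let $\mathcal A(u)$ be the event that some $g\in\mathcal F^*$ with $\|g\|_n\ge u$ satisfies $\big|\frac{\sigma}{n}\sum_i w_ig(\bm x_i)\big|\ge 2u\|g\|_n$. If $\|\hat\Delta\|_n^2\ge16t\delta_*=16u^2$, then the basic inequality gives
\[
\frac{\sigma}{n}\sum_i w_i\hat\Delta(\bm x_i)\ge\tfrac12\|\hat\Delta\|_n^2\ge 2u\|\hat\Delta\|_n,
\]
so $\mathcal A(u)$ holds. On $\mathcal A(u)$, rescale to $\tilde g=(u/\|g\|_n)g$, which lies in $\mathcal F^*$ by star-shapedness and has $\|\tilde g\|_n=u$. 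Then $Z_n(u)\ge 2u^2$. It follows that
\[
\mathbb P(\|\hat\Delta\|_n^2\ge16t\delta_*)\le\mathbb P\big(Z_n(u)\ge 2u^2\big)\le \mathbb P\big(Z_n(u)\ge \sigma\mathcal G_n+u^2\big)\le \exp\!\Big(-\frac{nu^2}{2\sigma^2}\Big)=\exp\!\Big(-\frac{nt\delta_*}{2\sigma^2}\Big),
\]
where the last inequality takes $s=u^2$. The constants match the statement.

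\textbf{Main obstacle.} The hardest part will be the formal side rather than the combinatorics. \cref{thm:GLipC} requires an honest Lipschitz function on $\mathbb R^n$, so the supremum over a possibly uncountable $\mathcal F^*$ must be finite-valued (bounded by $\sigma u\|w\|_2/\sqrt n$) and shown Lipschitz via a supremum of Lipschitz functions. Measurability of $\{\|\hat\Delta\|_n^2\ge16t\delta_*\}$ is also an issue, since $\hat f$ depends on $w$. I would sidestep this by bounding the outer event by the measurable superlevel set of the Lipschitz (hence continuous) function $Z_n(u)$. The monotonicity of $\mathcal G_n(\mathcal F(\delta))/\delta$ needs its own lemma, obtained by the same rescaling inside the expectation. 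Finally, \cref{thm:GLipC} is two-sided, so I would extract the one-sided bound from the sub-Gaussian moment generating function bound in its proof rather than lose a factor of $2$.
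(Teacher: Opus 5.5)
Your proposal is correct and is essentially the proof the paper formalizes, namely Wainwright's argument. It has three ingredients: the basic inequality; the bad-event bound (\texttt{bad\_event\_probability\_bound}, Wainwright's Lemma 13.12) via star-shaped rescaling and one-sided Gaussian Lipschitz concentration of the $(\sigma u/\sqrt n)$-Lipschitz supremum $Z_n(u)$; and the monotonicity of $\delta\mapsto\mathcal G_n(\mathcal F(\delta))/\delta$ (\texttt{gaussian\_complexity\_monotone}) combined with the critical inequality at $u=\sqrt{t\delta_*}$. The differences are only bookkeeping: the Lean statement phrases the bad event through a supremum over the empirical sphere and carries boundedness, integrability and nonemptiness as explicit hypotheses, whereas your ball-based supremum (which contains $0$), the Cauchy--Schwarz bound and Lipschitz continuity make these automatic.
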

To obtain $\delta_*$, we need to derive an upper bound on \cref{eq:lgc}, then the critical inequality in terms of entropy integral becomes solvable. Therefore, we make use of \cref{thm:dudley} to formalize the following capacity control:
\begin{theorem}\label{thm:dudleyapp}
    For any star-shaped class $\mathcal{F}^*$, we have
    \begin{equation*}
        \mathcal G_n\left(\mathcal F(\delta)\right) \le \frac{24\sqrt 2}{\sqrt{n}} \int_0^{2\delta} \sqrt{\log N(\epsilon,\mathcal{F}(\delta), \|\cdot\|_n)} \,d\epsilon\,.
    \end{equation*}
\end{theorem}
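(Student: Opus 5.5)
The plan is to apply \cref{thm:dudley} to the Gaussian process indexed by the localized class, and to pay a factor of $2$ to remove the absolute value in \cref{eq:lgc}. Index set: take $A$ to be the functions $X\to\mathbb R$ with the empirical pseudo-metric $d(g,h)=\|g-h\|_n$. In Lean this is a type synonym carrying a \lstinline{PseudoMetricSpace} instance, since $\|\cdot\|_n$ only separates functions through their values at $\bm x_1,\dots,\bm x_n$. Set $s=\{g\in\mathcal F^*:\|g\|_n\le\delta\}=\mathcal F(\delta)$ and define the process
\[
Z_g(w)=\frac1n\sum_{i=1}^n w_i\, g(\bm x_i),\qquad w\sim \gamma^{\otimes n}.
\]

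\textbf{Step 1: verify the hypotheses of \cref{thm:dudley}.}
\begin{itemize}
\item \emph{Sub-Gaussianity.} The increment $Z_g-Z_h$ is a centered Gaussian with variance $\frac1{n^2}\sum_i (g-h)(\bm x_i)^2=\|g-h\|_n^2/n$. Its moment generating function is therefore exactly $\exp(\lambda^2\|g-h\|_n^2/(2n))$. Hence \lstinline{IsSubGaussianProcess} holds with $\sigma=1/\sqrt n$, and the exponential-moment integrability comes from the same computation. This reduces, by independence, to a product of one-dimensional Gaussian MGFs.
\item \emph{Base point.} Star-shapedness gives $0\in s$, so we take $t_0=0$, for which $Z_0\equiv 0$.
\item \emph{Diameter.} The triangle inequality gives $\operatorname{diam}(s)\le 2\delta$, so we take $D=2\delta$ (assuming $\delta>0$).
\item \emph{Continuity of paths.} By Cauchy--Schwarz, $|Z_g(w)-Z_h(w)|\le \frac{\|w\|_2}{\sqrt n}\|g-h\|_n$. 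So each sample path is Lipschitz, hence continuous on $s$.
\item \emph{Measurability.} Each $Z_g$ is a linear function of $w$, hence measurable.
\item \emph{Total boundedness.} The evaluation map $g\mapsto (g(\bm x_i))_i/\sqrt n$ is an isometry from $(A,d)$ into Euclidean $\mathbb R^n$. It sends $s$ into the ball of radius $\delta$, which is totally bounded, and total boundedness pulls back along isometries.
\item \emph{Finite entropy integral.} The same embedding gives the volumetric bound $N(\epsilon,s,d)\le(1+2\delta/\epsilon)^n$. Then $\sqrt{\log N}\lesssim\sqrt{n\log(1+2\delta/\epsilon)}$, which is integrable on $(0,2\delta]$. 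So \lstinline{entropyIntegralENNReal} $\ne\top$, and the real wrapper agrees with the integral in the statement.
\end{itemize}
\cref{thm:dudley} then yields
\[
\mathbb E\sup_{g\in s} Z_g\le 12\sqrt2\cdot\frac{1}{\sqrt n}\int_0^{2\delta}\sqrt{\log N(\epsilon,\mathcal F(\delta),\|\cdot\|_n)}\,d\epsilon .
\]

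\textbf{Step 2: remove the absolute value.} Because $0\in s$, both $\sup_s Z_g\ge 0$ and $\sup_s(-Z_g)\ge0$. Hence
\[
\sup_{g\in s}|Z_g|=\max\Big(\sup_s Z_g,\ \sup_s(-Z_g)\Big)\le \sup_s Z_g+\sup_s(-Z_g).
\]
The map $w\mapsto -w$ preserves $\gamma^{\otimes n}$ and sends $Z_g$ to $-Z_g$. Therefore $\mathbb E\sup_s(-Z_g)=\mathbb E\sup_s Z_g$, and we obtain $\mathcal G_n(\mathcal F(\delta))\le 2\,\mathbb E\sup_s Z_g$. Combined with Step 1 this gives the constant $24\sqrt2/\sqrt n$.

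\textbf{Main obstacle.} I expect the analytic content to be routine; the difficulty is the bookkeeping around the supremum. The suprema in Lean are \lstinline{iSup}s over a possibly uncountable set, so we must show that $\sup_s|Z_g|$ is measurable and integrable before using linearity of the integral and the reflection $w\mapsto-w$.

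The first approach I would try is to reuse the countable dense subset that the proof of \cref{thm:dudley} already builds. By path continuity, the supremum over $s$ equals the supremum over this countable subset, which is measurable. Integrability follows from the bound $\sup_s|Z_g|\le \delta\|w\|_2/\sqrt n$, which is Cauchy--Schwarz with $h=0$.

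A secondary nuisance is the degenerate case $\delta=0$ (or $s$ collapsing to $\{0\}$ up to the pseudo-metric). There both sides vanish or the inequality is trivial, so this case should be split off before invoking \cref{thm:dudley}, whose hypothesis requires $D>0$.
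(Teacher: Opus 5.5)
Your proposal is correct and follows the paper's route: apply \cref{thm:dudley} to the empirical Gaussian process with $\sigma=1/\sqrt n$, $D=2\delta$ and base point $t_0=0$ (supplied by star-shapedness), then use $\sup_s|Z_g|\le\sup_s Z_g+\sup_s(-Z_g)$ to pick up the factor $2$ that turns $12\sqrt2$ into $24\sqrt2$. The differences are cosmetic. The paper's Lean lemma indexes the process by the image of $\mathcal F(\delta)$ in the empirical space rather than by functions under a pseudo-metric, and it assumes as hypotheses the total boundedness, path continuity, finite entropy integral and integrability of the positive and negative suprema (\texttt{hH\_tb}, \texttt{hcont}, \texttt{hfinite}, \texttt{hint\_pos}, \texttt{hint\_neg}), which you prove directly.
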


{\bf Remark:} Due to the page limit, we present the Lean details of \cref{thm:meb,thm:dudleyapp} in \cref{app:lean-ls}.

This reduces the problem to bounding covering numbers, which we demonstrate in two applications to verify the functionality of our framework and shape the formalization standard for covering calculus.

\subsection{Linear Regression}
\label{sec:lr}

We consider the linear regression case ($n \ge d$) where the ground truth model is
$y_i = \langle \bm \theta,\bm x_i\rangle + \sigma w_i$
associated with the linear predictor class $\mathcal F = \{f(\cdot)=\langle \bm \theta,\cdot\rangle:\bm \theta \in \mathbb R^d\}$ formalized as \lstinline{linearPredictorClass}. We apply the general framework to obtain the following rate theorem.
\begin{theorem}\label{thm:linear-rate}
    Let $\bm X \in \mathbb R^{n\times d}$ be the design matrix, define $r:=\operatorname{rank}(\bm X)$. Then, for the linear predictor class $\mathcal F$,
    \[
    \mathbb P \left(\|\hat f - f^*\|_n^2 \leq C_1 \frac{\sigma^2 r}{n}\right) \geq 1 - \exp \left(- C_2 r\right)\,,
    \]
    for some constants $C_1\,,C_2>0$.
\end{theorem}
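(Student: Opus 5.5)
The plan is to instantiate the general least-squares framework: \cref{thm:meb} gives the tail bound once the critical radius $\delta_*$ is computed, and \cref{thm:dudleyapp} reduces the localized Gaussian complexity to a covering-number integral. The shifted class $\mathcal F^* = \{\langle \bm\theta - \bm\theta^*,\cdot\rangle\}$ is itself a linear space, so it is star-shaped. Evaluating on the design, it is isometric (under $\|\cdot\|_n$) to the column space $V=\operatorname{range}(\bm X)\subseteq\mathbb R^n$, which has dimension $r$, equipped with the norm $\|\bm v\|_2/\sqrt n$.

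Hence $\mathcal F(\delta)$ is identified with a Euclidean ball of radius $\delta$ in an $r$-dimensional space. The first concrete step is a volumetric covering bound $N(\epsilon,\mathcal F(\delta),\|\cdot\|_n)\le (1+2\delta/\epsilon)^r$. To obtain it, I would choose an orthonormal basis of $V$ and transport to $\mathbb R^r$. Then I would apply the standard maximal-packing/volume comparison, either via Lebesgue measure on \lstinline{EuclideanSpace ℝ (Fin r)} or by citing an existing Mathlib ball-volume lemma.

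Plugging this into \cref{thm:dudleyapp} gives
\[
\mathcal G_n(\mathcal F(\delta))\le \frac{24\sqrt2}{\sqrt n}\sqrt r\int_0^{2\delta}\sqrt{\log(1+2\delta/\epsilon)}\,d\epsilon = c_0\,\delta\sqrt{r/n}.
\]
Here $c_0$ is an absolute constant, since the substitution $\epsilon=\delta u$ scales out $\delta$ and $\int_0^2\sqrt{\log(1+2/u)}\,du<\infty$. Alternatively, one can bound $\log(1+2/u)\le 2/u$ and integrate $\sqrt{2/u}$ explicitly, which is easier in Lean. The critical inequality $\mathcal G_n(\mathcal F(\delta))/\delta\le\delta/(2\sigma)$ then holds whenever $\delta\ge 2c_0\sigma\sqrt{r/n}$, so $\delta_*\le 2c_0\sigma\sqrt{r/n}$.

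Finally, I would apply \cref{thm:meb} with $t=\delta_*$, or more robustly with $t=\bar\delta:=2c_0\sigma\sqrt{r/n}\ge\delta_*$. This gives $\|\hat f-f^*\|_n^2< 16t\delta_*\le 16\bar\delta^2 = 64c_0^2\sigma^2 r/n$ with failure probability at most $\exp(-nt\delta_*/2\sigma^2)$. Lower-bounding this exponent is the subtle point, because $\delta_*$ could be smaller than $\bar\delta$. I would handle it by noting that \cref{thm:meb} allows any $t\ge\delta_*$, and that the bound $16t\delta_*\le 16t\bar\delta$ is monotone. Choosing $t=\bar\delta^2/\delta_*$ gives $16t\delta_*=16\bar\delta^2$ and exponent $n\bar\delta^2/(2\sigma^2)=2c_0^2 r$. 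This yields $C_1=16\cdot 4c_0^2$ and $C_2=2c_0^2$. The edge case $r=0$ is trivial, since then $\hat f=f^*$ on the design.

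I expect the main obstacle to be the volumetric covering bound in the rank-$r$ subspace. It requires building the isometry from $(\mathcal F^*,\|\cdot\|_n)$ to $\mathbb R^r$ and checking that covering numbers transfer, including the edge-case conventions of \lstinline{metricEntropy}. It also requires proving the packing–volume comparison with measurability and finiteness side conditions.
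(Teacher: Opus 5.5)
Your proposal is correct and follows the paper's route. You bound the localized Gaussian complexity via \cref{thm:dudleyapp}, reduce $\mathcal F(\delta)$ under $\|\cdot\|_n$ to the Euclidean ball $\mathcal B_2^r(\delta)$, apply the volumetric bound $(1+2\delta/\epsilon)^r$ of \cref{lem:euclidean} to get $\delta_*=\mathcal O(\sigma\sqrt{r/n})$, and conclude with \cref{thm:meb}. Your choice $t=\bar\delta^2/\delta_*$ handles the exponent correctly, though the formal \texttt{master\_error\_bound} accepts any positive solution of the critical inequality, so you could simply take $\delta_*=t=\bar\delta$.
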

Our formalization is:
\begin{lstlisting}
theorem linear_minimax_rate_rank (hn : 0 < n)
  (M : RegressionModel n (EuclideanSpace ℝ (Fin d))) (hf_star : M.f_star ∈ linearPredictorClass d) (hr : 0 < designMatrixRank M.x) (f_hat : (Fin n → ℝ) → (EuclideanSpace ℝ (Fin d) → ℝ)) (hf_hat : ∀ w, isLeastSquaresEstimator (M.response w) (linearPredictorClass d) M.x (f_hat w)) :
  ∃ C₁ C₂ : ℝ, C₁ > 0 ∧ C₂ > 0 ∧ (stdGaussianPi n {w | (empiricalNorm n (fun i => f_hat w (M.x i) - M.f_star (M.x i)))^2 ≤ C₁ * M.σ^2 * (designMatrixRank M.x) / n}).toReal ≥ 1 - exp (-C₂ * (designMatrixRank M.x)) := by
\end{lstlisting}
where \lstinline{designMatrixRank} is $r$.
We briefly present the formalization strategy. We apply \cref{thm:dudleyapp} to $\mathcal F$ then our goal is to upper bound the covering number. Then, we formalize the following Euclidean reduction
\[
\log N(\epsilon,\mathcal{F}(\delta), \|\cdot\|_n) \le \log N(\epsilon,\mathcal{B}_2^r(\delta), \|\cdot\|_2)\,,
\]
where $\mathcal{B}_2^r(\delta)$ is the $\ell_2$ ball of radius $\delta$ on $\mathbb R^r$. We then formalize the covering number bound on $\ell_2$ ball \citep[Corollary 4.2.11]{vershynin2018high}.
\begin{theorem}\label{lem:euclidean}
    The covering numbers of $\ell_2$ ball of radius $R$ on $\mathbb R^\iota$ satisfy for any $\epsilon>0$:
    \[
    N(\epsilon,\mathcal{B}_2^\iota(R), \|\cdot\|_2) \le \left(1+\frac{2R}{\epsilon}\right)^\iota\,.
    \]
\end{theorem}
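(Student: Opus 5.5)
The plan is the standard volumetric argument via a maximal packing. Fix $R>0$ and $\epsilon>0$, and write $B:=\mathcal{B}_2^\iota(R)$ and $B_1$ for the closed unit ball in $\mathbb R^\iota$. Call a finite set $P\subseteq B$ \emph{$\epsilon$-separated} if distinct points of $P$ are at distance $>\epsilon$.

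\textbf{Step 1 (a maximal separated set is a net).} I will show that every $\epsilon$-separated subset of $B$ has at most $(1+2R/\epsilon)^\iota$ points. Given this, choose an $\epsilon$-separated set $P$ of maximal cardinality; such a $P$ exists because the cardinalities are bounded. If some $x\in B$ had $\|x-p\|_2>\epsilon$ for every $p\in P$, then $P\cup\{x\}$ would be a larger separated set, a contradiction. Hence $P$ is an $\epsilon$-net of $B$ in the sense of \lstinline{IsENet}, and therefore $N(\epsilon,B,\|\cdot\|_2)\le |P|\le(1+2R/\epsilon)^\iota$. Centers lie in $B$, which is allowed by the definition of \lstinline{coveringNumber}. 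In Lean I would avoid a genuine maximality argument. Instead I would run strong induction on $(1+2R/\epsilon)^\iota-|P|$, or take \lstinline{Nat.find} on the set of achievable cardinalities, both of which are bounded.

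\textbf{Step 2 (volume bound on separated sets).} Let $P$ be $\epsilon$-separated. The open balls $B^\circ(p,\epsilon/2)$ for $p\in P$ are pairwise disjoint by the triangle inequality. Each of them lies inside $(R+\epsilon/2)B_1$. Apply Lebesgue measure $\lambda$ on \lstinline{EuclideanSpace ℝ (Fin ι)}, using additivity over the disjoint finite union together with translation invariance and the scaling law $\lambda(aB_1)=a^\iota\lambda(B_1)$. Mathlib provides these through \lstinline{Measure.addHaar_ball} and \lstinline{addHaar_closedBall}. This gives
\[
|P|\,(\epsilon/2)^\iota\lambda(B_1)\le (R+\epsilon/2)^\iota\lambda(B_1).
\]
Since $0<\lambda(B_1)<\infty$, dividing through yields $|P|\le\bigl((R+\epsilon/2)/(\epsilon/2)\bigr)^\iota=(1+2R/\epsilon)^\iota$.

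\textbf{Edge cases.} When $\iota=0$ the space is a single point, so the covering number is $1=(1+2R/\epsilon)^0$. This case can be handled directly, or it falls out of the argument since then $\lambda(B_1)=1$. If $R<0$ the ball is empty and the bound is trivial. The case $R=0$ is covered by the general argument.

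\textbf{Expected obstacle.} I expect the main difficulty to be the measure-theoretic bookkeeping in Step 2 rather than the combinatorics. This includes working in \lstinline{ℝ≥0∞}, invoking \lstinline{measure_biUnion_finset} with pairwise disjointness, and converting back to real numbers. To do so I need positivity and finiteness of the ball volume, and I need to cancel $\lambda(B_1)$ through \lstinline{ENNReal.mul_le_mul_right}. I would first try to phrase everything with \lstinline{addHaar_ball_of_pos} so that the volume appears as the scalar $\mathrm{ofReal}(r^\iota)$ times $\lambda(B_1)$, which makes the cancellation a single lemma.
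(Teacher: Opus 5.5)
Your proposal is correct and follows essentially the paper's route. The paper formalizes the same two ingredients, following Vershynin: the fact that a maximal $\epsilon$-separated subset is an $\epsilon$-net (\texttt{isENet\_of\_maximal}), and the volumetric comparison of disjoint $\epsilon/2$-balls inside the $(R+\epsilon/2)$-ball behind \texttt{coveringNumber\_euclideanBall\_le}.

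One slip lies outside the theorem's scope. Your aside that the case $R<0$ is trivial is false when $R<-\epsilon/2$ and $\iota$ is odd: then $(1+2R/\epsilon)^\iota<0$, while the covering number of the empty ball is $0$. This is why the formal statement carries the hypothesis $0\le R$.
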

\begin{lstlisting}
theorem coveringNumber_euclideanBall_le 
  {R eps : ℝ} (hR : 0 ≤ R) 
  (heps : 0 < eps) :
  ((coveringNumber eps (euclideanBall R : Set (EuclideanSpace ℝ ι))).untop (ne_top_of_lt (coveringNumber_lt_top_of_totallyBounded heps (euclideanBall_totallyBounded R))) : ℝ) ≤ (1 + 2 * R / eps) ^ Fintype.card ι := by
\end{lstlisting}
Therefore, we can get $\delta_*=\mathcal O(\sqrt{r/n})$ then apply \cref{thm:meb} to conclude \cref{thm:linear-rate}.

\subsection{High-Dimensional $\ell_1$ Regression}
\label{sec:l1lr}

We consider the $\ell_1$-constrained regression (equivalent to Lasso), which allows for $d > n$ case. The function class is $\mathcal F_R=\{f(\cdot)=\langle \bm \theta,\cdot\rangle:\bm \theta \in \mathcal B_1^d (R)\}$ where $\mathcal B_1^d (R)$ is $\ell_1$-ball of radius $R$ on $\mathbb R^d$. Following the setting in \citet{raskutti2011minimax}, the key of deriving rate is the Euclidean covering of $\ell_1$-convex hull. We formalize this bound as:
\begin{lemma}\label{lem:cover-convex}
    Assume $\bm X$ is normalized column-wise to have $\ell_2$ norm bounded by $\sqrt n$. For any $\epsilon>0$.
    \begin{align*}
        N\left(\epsilon, \operatorname{absconv}_1(\bm X / \sqrt{n};R), \|\cdot\|_2\right) \le (2d+1)^{\lceil R^2/\epsilon^2\rceil}\,,
    \end{align*}
    where $\operatorname{absconv}_1(\bm X / \sqrt{n};R):= \{\bm X \bm \theta /\sqrt{n}:\|\bm \theta\|_1\le R\}$.
\end{lemma}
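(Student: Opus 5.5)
The plan is Maurey's empirical method: probabilistic sampling from the atoms of the convex hull, followed by a counting argument. Write $\bm a_j := \bm X_{:,j}/\sqrt n \in \mathbb R^n$ for the normalized columns. The hypothesis gives $\|\bm a_j\|_2 \le 1$. Any point of $\operatorname{absconv}_1(\bm X/\sqrt n;R)$ has the form $\bm v = \sum_j \theta_j \bm a_j$ with $\|\bm\theta\|_1 \le R$.

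\emph{Step 1: rewrite $\bm v$ as an expectation.} Introduce the atom set $\mathcal A := \{R\bm a_j, -R\bm a_j : j\le d\}\cup\{\bm 0\}$, which has $2d+1$ elements. Then $\bm v$ is a convex combination of $\mathcal A$: the weight on $\operatorname{sign}(\theta_j) R\bm a_j$ is $|\theta_j|/R$, and the remaining mass $1-\|\bm\theta\|_1/R$ goes on $\bm 0$. So $\bm v = \mathbb E[\bm Z]$ for a random vector $\bm Z$ taking values in $\mathcal A$, with $\|\bm Z\|_2\le R$ almost surely.

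\emph{Step 2: sampling bound.} Set $k := \lceil R^2/\epsilon^2\rceil$ and let $\bm Z_1,\dots,\bm Z_k$ be i.i.d. copies of $\bm Z$. The standard variance computation in a Hilbert space gives
\[
\mathbb E\Big\|\tfrac1k\textstyle\sum_{i}\bm Z_i - \bm v\Big\|_2^2 = \tfrac1k\,\mathbb E\|\bm Z-\bm v\|_2^2 \le \tfrac1k\,\mathbb E\|\bm Z\|_2^2 \le \tfrac{R^2}{k}\le \epsilon^2 .
\]
Some realization therefore satisfies $\|\frac1k\sum_i \bm z_i - \bm v\|_2\le\epsilon$ with each $\bm z_i \in \mathcal A$.

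\emph{Step 3: counting.} Every point of the form $\frac1k\sum_i \bm z_i$ lies in the finite set $\mathcal N := \{\frac1k\sum_{i=1}^k \bm z_i : \bm z_i\in\mathcal A\}$. This set is the image of $\mathcal A^k$, so $|\mathcal N|\le(2d+1)^k$. By Step 2, $\mathcal N$ is an $\epsilon$-net in the sense of \lstinline{IsENet}, using closed balls. Since \lstinline{coveringNumber} is an infimum over nets, it is at most $|\mathcal N|$, which is the claim. The net need not lie inside the set, and the definition does not require it to. The edge case $R=0$ gives $k=0$ and $\mathcal N=\{\bm 0\}$, consistent with $(2d+1)^0=1$. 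The case $R<0$ gives the empty set.

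\emph{Main obstacle.} I expect the hardest part, at least in Lean, to be Step 2. Formalizing it literally as a probabilistic statement requires building the product measure and independence. I would instead replace it with a purely finite averaging argument. Let $p$ be the discrete distribution on $\mathcal A$, indexed by $\mathrm{Fin}(2d+1)$. Then compute the weighted sum over $\mathrm{Fin}\,k\to\mathrm{Fin}(2d+1)$ with weights $\prod_i p(\iota_i)$, expanding the squared norm. The cross terms vanish because $\sum_\iota p(\iota)(\bm z_\iota-\bm v)=0$. This shows the weighted average of $\|\frac1k\sum_i\bm z_{\iota_i}-\bm v\|^2$ is at most $R^2/k$. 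Finally, argue that some term is at most the average via a Finset argmin lemma. Choosing $\bm Z$ with the padding atom $\bm 0$ avoids dividing by $\|\bm\theta\|_1$.
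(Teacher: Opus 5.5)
Your proposal is correct and follows essentially the same route as the paper's proof, which is Maurey's empirical method: write $\bm v=\mathbb E[\bm Z]$ over the $2d+1$ atoms $\{\bm 0,\pm R\bm X_{:,j}/\sqrt n\}$, average $k=\lceil R^2/\epsilon^2\rceil$ i.i.d.\ copies so the mean squared error is at most $R^2/k\le\epsilon^2$, pick a good realization, and count the at most $(2d+1)^k$ possible averages. The differences are cosmetic: you bound $\mathbb E\|\bm Z\|_2^2$ by $R^2$ directly from $\|\bm Z\|_2\le R$ rather than through the intermediate bound $R\|\bm\theta\|_1$, and you explicitly handle the $R=0$ edge case and a finite weighted-averaging substitute for the product-measure argument.
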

The empirical covering follows $\log N(\epsilon,\mathcal{F}_R(\delta), \|\cdot\|_n) \lesssim \frac{R^2}{\epsilon^2}\log d$, which can admit the $\mathcal{O}(R\sqrt{\log d /n})$-rate.
The implementation challenge of \cref{lem:cover-convex} arises from the need of Maurey's argument, see more details in \cref{app:maurey}.

\section{Human-AI Collaborative Formalization}
\label{sec:human-ai}

In this section, we distill three transferable methodological
contributions and one critical observation from $\sim$500 hours of
supervised development, supported by $\sim$60 preserved task
specifications. We further classify the declarations in our library
by the degree of human intervention required and provide a
quantitative breakdown. A self-contained practical recipe is provided in
\cref{app:workflow-recipe}.
 
\subsection{Structured Specification Protocol}
\label{sec:spec-protocol}

We empirically find that unstructured instructions without explicit infrastructure pointers fail roughly
$70\%$ of the time.
We reduce the first-attempt failure rate to roughly
$15\%$via a structured
\texttt{TASK.md} document with four components:
\begin{enumerate}
    \item \textbf{Target statement.} Besides the exact Lean signature to
    be proved, we also need to provide the self-contained natural language.
    \item \textbf{Infrastructure pointers.} Names and file paths of
    locally available lemmas the proof is expected to invoke\footnote{This part might not be needed for the current frontier models.}.
    \item \textbf{Formalization-oriented proof plan.} A detailed step-by-step
    plan phrased in tactic-level language rather than purely informal
    mathematics.
    \item \textbf{Hard boundaries.} Soft boundaries for agent to follow. See details in \cref{app:workflow-recipe}.
\end{enumerate}

\subsection{Iterative Specification Evolution}
\label{sec:spec-evolution}
 
A single structured \texttt{TASK.md} is rarely correct on
the first attempt for nontrivial proofs. We consistently observe
that \emph{retrying with the same spec essentially never succeeds};
each failure surfaces a distinct ambiguity left implicit by the
human, and the productive response is to \emph{refine the spec}. We illustrate this on the density extension of the Gaussian
logarithmic Sobolev inequality (GLSI) from $C_c^\infty$ to $W^{1,2}$,
which required four spec iterations:
\begin{itemize}
    \item \textbf{v1 (vague strategy).} Hand-waves entropy
    lower-semicontinuity as ``a Fatou-type argument''. The agent
    fails because Mathlib's Fatou demands
    nonnegative integrands, while $t\log t$ goes negative on $(0,1)$.
    \item \textbf{v2 (dependencies).} Adds infrastructure pointers
    to the relevant convergence lemmas but leaves the negativity
    issue unresolved.
    persistently reports \texttt{sorry}.
    \item \textbf{v3 (decomposition).} Factors out the gradient
    convergence sub-argument into its own lemma, narrowing the
    remaining obstacle to entropy convergence alone.
    \item \textbf{v4 (trick, domain, API).} Closes the argument
    with the shift trick
    and type domain switch. The agent succeeds on the first
    attempt.
\end{itemize}
More details of each specification can be found in \cref{app:example}. The agent's failures are
\emph{informative}: they precisely localize what the human left
implicit, rendering the collaboration \textbf{convergent} rather
than open-ended.
 
\subsection{Verification of Statement's Consistency}
\label{sec:non-delegable}

The most dangerous failure mode we encountered is not a failed
proof but a \emph{successful proof of a false statement}: a Lean
term that type-checks against a misformalized goal. In our library
we caught three such cases that had survived multiple rounds of AI's self-judgment:
(i) A false ``gap'' bound requiring control of $N(\varepsilon)/N(2\varepsilon)$, which is \emph{not} implied by \texttt{TotallyBounded};
(ii) An impossible disjunction false for specific parameter ranges;
(iii) A Riemann rectangle used as an upper bound on a non-monotone integrand.

In each case the agent persistently reported \texttt{sorry} and
proposed progressively more aggressive structural rewrites; only a
human, by constructing explicit counterexamples, detected that the
\emph{statement} was wrong. We therefore recommend: (i) human
review of every target statement \emph{before} proof attempts,
(ii) counterexample checking for any nontrivial inequality, and
(iii) treating a persistent \texttt{sorry} as a signal to re-verify
the \emph{statement}, not merely the proof.

\subsection{Quantitative Breakdown}
\label{sec:quant-breakdown}
 
By tracing each formalized declaration back to its authoring
specification, we partition our library into four levels of human
intervention (Table~\ref{tab:intervention}). The \emph{Specified}
tier dominates: in our experience the bottleneck is rarely tactical
proof construction (which the agent handles well) but rather the
explicit articulation of a proof that textbooks compress into a
single sentence. The \emph{Human-critical} tier captures the
non-delegable verification cases above and a small number of
theorems whose initial mathematical strategy was flawed and
required human-driven re-architecture.
 
\begin{table}[t]
\caption{Percentage of formalized declarations by level of human
intervention.}
\label{tab:intervention}
\vskip 0.1in
\begin{center}
\begin{small}
\begin{tabular}{lcc}
\toprule
Level & Description & Percentage \\
\midrule
Autonomous     & AI proves alone           & $\sim$15\% \\
Sketch-Guided         & AI fills tactic details    & $\sim$30\% \\
Specified      & AI proves under full NL proof     & $\sim$40\% \\
Human-critical & Human intervenes  & $\sim$15\% \\
\bottomrule
\end{tabular}
\end{small}
\end{center}
\vskip -0.1in
\end{table}

\section{Conclusion}

We present the first large-scale Lean 4 formalization of SLT-approximately 30,000 lines of verified code building all infrastructures \emph{from scratch} through human-AI collaboration. The developed Lean 4 formulation framework includes the high-dimensional Gaussian analysis toolbox and Dudley's entropy integral toolbox, which deepens mathematical understanding and open the door to formulation of modern machine learning theory.

\section*{Acknowledgment}
Y. Z. was supported by Warwick Chancellor's International Scholarship.
JDL acknowledges support of Open Philanthropy, NSF IIS 2107304, NSF CCF 2212262, ONR Young Investigator Award, NSF CAREER Award 2144994, and NSF CCF 2019844.
F. L. was supported by Warwick-SJTU seed fund.
We thank Zulip\footnote{\url{https://zulip.com/}} for the project organization tool and Sulis\footnote{\url{https://warwick.ac.uk/research/rtp/sc/sulis/}} for computation resources.

\section*{Impact Statement}

This paper presents work whose goal is to advance the field of Lean 4 for Machine
Learning Theory. There might be some potential societal consequences of our work, none
which we feel must be specifically highlighted here.

\bibliography{example_paper}

@article{sonoda2025lean,
  title={{Lean Formalization of Generalization Error Bound by Rademacher Complexity}},
  author={Sonoda, Sho and Kasaura, Kazumi and Mizuno, Yuma and Tsukamoto, Kei and Onda, Naoto},
  journal={arXiv preprint arXiv:2503.19605},
  year={2025}
}

@article{bruna2025survey,
  title={Survey on algorithms for multi-index models},
  author={Bruna, Joan and Hsu, Daniel},
  journal={arXiv preprint arXiv:2504.05426},
  year={2025}
}

@inproceedings{montanari2025dynamical,
title={Dynamical Decoupling of Generalization and Overfitting in Large Two-Layer Networks},
author={Andrea Montanari and Pierfrancesco Urbani},
booktitle={The Thirty-ninth Annual Conference on Neural Information Processing Systems},
year={2025},
}

@incollection{van1996weak,
  title={Weak convergence},
  author={Van Der Vaart, Aad W and Wellner, Jon A},
  booktitle={Weak convergence and empirical processes: with applications to statistics},
  pages={16--28},
  year={1996},
  publisher={Springer}
}

@inproceedings{brown2020language,
  title={Language models are few-shot learners},
  author={Brown, Tom and Mann, Benjamin and Ryder, Nick and Subbiah, Melanie and Kaplan, Jared D and Dhariwal, Prafulla and Neelakantan, Arvind and Shyam, Pranav and Sastry, Girish and Askell, Amanda and others},
  booktitle={Advances in Neural Information Processing Systems},
  pages={1877--1901},
  year={2020}
}

@article{lecun2015deep,
  title={Deep learning},
  author={LeCun, Yann and Bengio, Yoshua and Hinton, Geoffrey},
  journal={nature},
  volume={521},
  number={7553},
  pages={436--444},
  year={2015},
  publisher={Nature Publishing Group UK London}
}

@book{vershynin2018high, 
  place={Cambridge}, 
  series={Cambridge Series in Statistical and Probabilistic Mathematics}, 
  title={High-Dimensional Probability: An Introduction with Applications in Data Science}, 
  publisher={Cambridge University Press}, 
  author={Vershynin, Roman}, 
  year={2018}, 
  collection={Cambridge Series in Statistical and Probabilistic Mathematics}
}

@book{wainwright2019high, 
  place={Cambridge}, 
  series={Cambridge Series in Statistical and Probabilistic Mathematics}, 
  title={High-Dimensional Statistics: A Non-Asymptotic Viewpoint}, 
  publisher={Cambridge University Press}, 
  author={Wainwright, Martin J.}, 
  year={2019}, 
  collection={Cambridge Series in Statistical and Probabilistic Mathematics}
}

@book{Boucheron2013concen,
    author = {Boucheron, Stéphane and Lugosi, Gábor and Massart, Pascal},
    title = {Concentration Inequalities: A Nonasymptotic Theory of Independence},
    publisher = {Oxford University Press},
    year = {2013},
    month = {02},
    url = {https://doi.org/10.1093/acprof:oso/9780199535255.001.0001},
}

@book{van2013weak,
  title={Weak Convergence and Empirical Processes: With Applications to Statistics},
  author={van der vaart, A. and Wellner, J.},
  series={Springer Series in Statistics},
  url={https://books.google.co.uk/books?id=zdDkBwAAQBAJ},
  year={2013},
  publisher={Springer New York}
}

@article{belkin2019reconciling,
  title={Reconciling modern machine-learning practice and the classical bias--variance trade-off},
  author={Belkin, Mikhail and Hsu, Daniel and Ma, Siyuan and Mandal, Soumik},
  journal={Proceedings of the National Academy of Sciences},
  volume={116},
  number={32},
  pages={15849--15854},
  year={2019},
  publisher={National Academy of Sciences}
}

@book{hastie2009elements,
  author    = {Hastie, Trevor and Tibshirani, Robert and Friedman, Jerome},
  title     = {{The Elements of Statistical Learning: Data Mining, Inference, and Prediction}},
  edition   = {2nd},
  series    = {Springer Series in Statistics},
  publisher = {Springer},
  address   = {New York},
  year      = {2009}
}

@article{mei2022generalization,
  title={The generalization error of random features regression: Precise asymptotics and the double descent curve},
  author={Mei, Song and Montanari, Andrea},
  journal={Communications on Pure and Applied Mathematics},
  volume={75},
  number={4},
  pages={667--766},
  year={2022},
  publisher={Wiley Online Library}
}

@article{bartlett2020benign,
  title={Benign overfitting in linear regression},
  author={Bartlett, Peter L and Long, Philip M and Lugosi, G{\'a}bor and Tsigler, Alexander},
  journal={Proceedings of the National Academy of Sciences},
  volume={117},
  number={48},
  pages={30063--30070},
  year={2020},
  publisher={National Academy of Sciences}
}

@article{tsigler2023benign,
  title={Benign overfitting in ridge regression},
  author={Tsigler, Alexander and Bartlett, Peter L},
  journal={Journal of Machine Learning Research},
  volume={24},
  number={123},
  pages={1--76},
  year={2023}
}

@inproceedings{abbe2022merged,
  title={The merged-staircase property: a necessary and nearly sufficient condition for sgd learning of sparse functions on two-layer neural networks},
  author={Abbe, Emmanuel and Adsera, Enric Boix and Misiakiewicz, Theodor},
  booktitle={Conference on Learning Theory},
  pages={4782--4887},
  year={2022},
  organization={PMLR}
}

@article{zhang2025towards,
  title={Towards Formalizing Reinforcement Learning Theory},
  author={Zhang, Shangtong},
  journal={arXiv preprint arXiv:2511.03618},
  year={2025}
}

@article{li2025formalizationa,
  title={Formalization of algorithms for optimization with block structures},
  author={Li, Chenyi and Wang, Zichen and Bai, Yifan and Duan, Yunxi and Gao, Yuqing and Hao, Pengfei and Wen, Zaiwen},
  journal={arXiv preprint arXiv:2503.18806},
  year={2025}
}

@article{li2025formalizationb,
  title={Formalization of optimality conditions for smooth constrained optimization problems},
  author={Li, Chenyi and Xu, Shengyang and Sun, Chumin and Zhou, Li and Wen, Zaiwen},
  journal={arXiv preprint arXiv:2503.18821},
  year={2025}
}

@article{li2024formalization,
  title={Formalization of Complexity Analysis of the First-order Optimization Algorithms},
  author={Li, Chenyi and Wang, Ziyu and He, Wanyi and Wu, Yuxuan and Xu, Shengyang and Wen, Zaiwen},
  journal={CoRR},
  year={2024}
}

@inproceedings{moura2021lean,
  title={The lean 4 theorem prover and programming language},
  author={Moura, Leonardo de and Ullrich, Sebastian},
  booktitle={International Conference on Automated Deduction},
  pages={625--635},
  year={2021},
  organization={Springer}
}

@article{raskutti2011minimax,
  title={Minimax rates of estimation for high-dimensional linear regression over $\ell_q $-balls},
  author={Raskutti, Garvesh and Wainwright, Martin J and Yu, Bin},
  journal={IEEE transactions on information theory},
  volume={57},
  number={10},
  pages={6976--6994},
  year={2011},
  publisher={IEEE}
}

@misc{claudecode2025,
  author = {{Anthropic}},
  title = {Claude Code},
  year = {2025},
  url = {https://github.com/anthropics/claude-code},
  note = {Accessed: 2026-01-27}
}

@misc{claudeopus45,
  author = {{Anthropic}},
  title = {{System Card: Claude Opus 4.5}},
  year = {2025},
  url = {http://www.anthropic.com/claude-opus-4-5-system-card},
  note = {Accessed: 2026-01-27}
}

@InProceedings{daras21a,
  title = 	 {Intermediate Layer Optimization for Inverse Problems using Deep Generative Models},
  author =       {Daras, Giannis and Dean, Joseph and Jalal, Ajil and Dimakis, Alex},
  booktitle = 	 {Proceedings of the 38th International Conference on Machine Learning},
  pages = 	 {2421--2432},
  year = 	 {2021},
  editor = 	 {Meila, Marina and Zhang, Tong},
  volume = 	 {139},
  series = 	 {Proceedings of Machine Learning Research},
  month = 	 {18-24 Jul},
  publisher =    {PMLR},
  url = 	 {https://proceedings.mlr.press/v139/daras21a.html}
}

@incollection{pisier2006probabilistic,
  title={Probabilistic methods in the geometry of Banach spaces},
  author={Pisier, Gilles},
  booktitle={Probability and Analysis: Lectures given at the 1st 1985 Session of the Centro Internazionale Matematico Estivo (CIME) held at Varenna (Como), Italy May 31--June 8, 1985},
  pages={167--241},
  year={2006},
  publisher={Springer}
}
\bibliographystyle{icml2026}

%%%%%%%%%%%%%%%%%%%%%%%%%%%%%%%%%%%%%%%%%%%%%%%%%%%%%%%%%%%%%%%%%%%%%%%%%%%%%%%
%%%%%%%%%%%%%%%%%%%%%%%%%%%%%%%%%%%%%%%%%%%%%%%%%%%%%%%%%%%%%%%%%%%%%%%%%%%%%%%
% APPENDIX
%%%%%%%%%%%%%%%%%%%%%%%%%%%%%%%%%%%%%%%%%%%%%%%%%%%%%%%%%%%%%%%%%%%%%%%%%%%%%%%
%%%%%%%%%%%%%%%%%%%%%%%%%%%%%%%%%%%%%%%%%%%%%%%%%%%%%%%%%%%%%%%%%%%%%%%%%%%%%%%
\newpage
\appendix
\onecolumn

\section{List of Key Results}
\label{app:docs}

Our formalization library contains more than 1000 theorems/lemmas. In this section, we aim to present the major results of our formalizations and provide an exact reference to locate the statement. The list is shown in \cref{tab:list}.

\begin{table}[ht]
    \centering
    \caption{List of our key formalization results with exact reference from textbooks.}
    \label{tab:list}
    \begin{tabular}{c|l}
    \toprule
    Name & Reference\\
    \midrule
      \lstinline{coveringNumber_lt_top_of_totallyBounded} & \citet[Remark 4.2.3]{vershynin2018high}\\
      \lstinline{isENet_of_maximal} & \citet[Lemma 4.2.6]{vershynin2018high}\\
      \lstinline{coveringNumber_euclideanBall_le} & \citet[Corollary 4.2.13]{vershynin2018high}\\
      \lstinline{coveringNumber_l1Ball_le} & \citet[Theorem 2]{daras21a}\\
      \lstinline{subGaussian_finite_max_bound} & \citet[Exercise 2.12]{wainwright2019high}\\
      \lstinline{dudley} & \citet[Corollary 13.2]{Boucheron2013concen}\\
      \lstinline{efronStein} & \citet[Theorem 3.1]{Boucheron2013concen} \\
      \lstinline{gaussianPoincare} & \citet[Theorem 3.20]{Boucheron2013concen} \\
      \lstinline{han_inequality} & \citet[Theorem 4.1]{Boucheron2013concen} \\
      \lstinline{entropy_duality} & \citet[Theorem 4.13]{Boucheron2013concen} \\
      \lstinline{entropy_duality_T} & \citet[Remark 4.4]{Boucheron2013concen} \\
      \lstinline{entropy_subadditive} & \citet[Theorem 4.22]{Boucheron2013concen} \\
      \lstinline{bernoulli_logSobolev} & \citet[Theorem 5.1]{Boucheron2013concen} \\
      \lstinline{gaussian_logSobolev_W12_pi} & \citet[Theorem 5.4]{Boucheron2013concen} \\
      \lstinline{lipschitz_cgf_bound} & \citet[Theorem 5.5]{Boucheron2013concen}\\
      \lstinline{gaussian_lipschitz_concentration} & \citet[Theorem 5.6]{Boucheron2013concen} \\
      \lstinline{local_gaussian_complexity_bound} & \citet[(5.48) Gaussian Case]{wainwright2019high} \\
      \lstinline{master_error_bound} & \citet[Theorem 13.5]{wainwright2019high} \\
      \lstinline{gaussian_complexity_monotone} & \citet[Lemma 13.6]{wainwright2019high} \\
      \lstinline{linear_minimax_rate_rank} & \citet[Example 13.8]{wainwright2019high} \\
      \lstinline{bad_event_probability_bound} & \citet[Lemma 13.12]{wainwright2019high} \\
      \lstinline{l1BallImage_coveringNumber_le} & \citet[Lemma 4, $q=1$]{raskutti2011minimax} \\
    \bottomrule
    \end{tabular}
\end{table}
\section{Dudley's Formalization Proof Details}
\label{app:dudley-3stage}

The chaining argument requires constructing a hierarchy of $\varepsilon$-nets at geometrically decreasing scales. We encapsulate this in the \lstinline{DyadicNets} structure, which provides for each level $k$ a finite set $T_k\subseteq s$ that is an $\varepsilon_k$-net of $s$, where $\varepsilon_k = D \cdot 2^{-k}$ is the dyadic scale at level $k$.
For the proof to succeed, we need ``good" dyadic nets satisfying the cardinality bound $|T_k| \leq N(\varepsilon_{k+1}, s, d)$. This relates the net size at level $k$ to the covering number at the finer scale $\varepsilon_{k+1} = \varepsilon_k/2$, which is essential for bounding the expected maximum of sub-Gaussian increments at each level later.

Another critical chaining is the dyadic approximation of the entropy integral. Define the dyadic sum
$$R_K(s, D) := \sum_{k=0}^{K-1} \varepsilon_k \cdot \sqrt{\log N(\varepsilon_k, s, d)}\,.$$
This sum approximates the entropy integral via a Riemann-like discretization at geometrically spaced points.

Furthermore, the chaining argument requires defining a sequence of approximations $\pi_0(u), \pi_1(u), \ldots, \pi_K(u) = u$ through the net hierarchy for each point $u$ in the finest net $T_K$. There are two natural approaches:

{\bf Direct projection:}  Standard presentations \citep{Boucheron2013concen,vershynin2018high} define $\pi_k(u)$ as the nearest point in $T_k$ to the original point $u$. The triangle inequality then gives
$$d(\pi_k(u), \pi_{k+1}(u)) \leq d(\pi_k(u), u) + d(u, \pi_{k+1}(u)) \leq \varepsilon_k + \varepsilon_{k+1} = \frac{3}{2}\varepsilon_k\,.$$

{\bf Recursive projection:} An alternative, used in \citet{van2013weak,wainwright2019high}, defines $\pi_k(u)$ as the nearest point in $T_k$ to $\pi_{k+1}(u)$, i.e. the coarser approximation is chosen to approximate the finer one, not the original point. For $u \in T_K$:
$$\pi_k(u) = \begin{cases} u & \text{if } k = K \\ \text{nearest point in } T_k \text{ to } \pi_{k+1}(u) & \text{if } k < K \end{cases}$$

Our formalization uses the recursive projection, which yields a tighter constant. Since $\pi_{k+1}(u) \in T_{k+1} \subseteq s$ and $T_k$ is an $\varepsilon_k$-net of $s$, we have $d(\pi_k(u), \pi_{k+1}(u)) \leq \varepsilon_k$ directly, without the factor of $3/2$. This improvement from $\frac{3}{2}\varepsilon_k$ to $\varepsilon_k$ propagates through the proof. While modest at each level, it accumulates to a noticeable reduction in the final constant.

Then, the formal proof of \lstinline{dudley} proceeds in three stages:

\subsection{Stage 1} 

The first stage establishes a bound for the expected supremum over a finite net. For the net $T_K$ at level $K$, our target is to prove
$$\mathbb{E}\left[\sup_{u \in T_K} (X_u - X_{t_0})\right] \leq 6\sqrt{2} \sigma \cdot R_{K+1}(s, D)\,.$$
For any $u \in T_K$, we write
$$X_u - X_{t_0} = (X_{\pi_0(u)} - X_{t_0}) + \sum_{k=0}^{K-1} (X_{\pi_{k+1}(u)} - X_{\pi_k(u)})\,,$$
where $\pi_k$ denotes the recursive projection to level $k$. The first term is the base term from the coarsest net, and the sum captures the increments as we move through finer nets.

For the \emph{base term}, using the finite maximum bound for sub-Gaussian processes, we have
$$\mathbb{E}\left[\sup_{u \in T_K} (X_{\pi_0(u)} - X_{t_0})\right] \leq \sigma \epsilon_0 \cdot \sqrt{2 \log |T_0|}\,,$$
since all points in the level-0 net are within distance $D$ of $t_0$. The cardinality bound $|T_0| \leq N(\varepsilon_1, s)$ allows us to express this as the first term of $R_{K+1}(s,D)$ scaled by $2\sqrt 2\sigma$.

For the \emph{increment terms}, the key observations are that these increments have distance at most $\varepsilon_k$ by the recursive projection bound, and that the number of distinct pairs $(\pi_k(u), \pi_{k+1}(u))$ is at most $|T_{k+1}|$ since $\pi_{k+1}(u) \in T_{k+1}$ determines the pair. Applying the finite maximum bound produces a bound of the form:
$$\mathbb{E}\left[\sup_{u \in T_K} (X_{\pi_{k+1}(u)} - X_{\pi_k(u)})\right] \leq \sigma \varepsilon_k \cdot \sqrt{2 \log |T_{k+1}|}\,.$$
By the cardinality boundness of ``good" nets, we sum over $k$ and re-index to obtain
\begin{align*}
& \mathbb{E}\left[\sup_{u \in T_K} \sum_{k=0}^{K-1}(X_{\pi_{k+1}(u)} - X_{\pi_k(u)})\right]\\
& \le\sum_{k=0}^{K-1} \sigma \varepsilon_k \cdot \sqrt{2 \log N(\varepsilon_{k+2}, s, d)} \leq 4\sqrt{2} \sigma \cdot R_{K+1}(s,D)\,.
\end{align*}

\subsection{Stage 2}

This stage extends the bound from finite nets to the countable supremum over a dense sequence $(t_n)_{n \in \mathbb{N}}$ in $s$, where Fatou's lemma enters. For convenience, we work in normalized sub-Gaussian settings ($X_{t_0} \equiv 0$) without loss of generality. The central challenge is that Fatou's lemma requires nonnegative integrands, but the supremum $Y_K := \sup_{u \in T_K} X_u$ may be negative. We resolve this by introducing a shift function
$$g(\omega) := \inf_{K \in \mathbb{N}} X_{\pi_K(t_0)}(\omega)$$
where $\pi_K(t_0)$ denotes the projection of $t_0$ onto $T_K$. With shift function in hand, we define $Z_K := Y_K - g \geq 0$. Fatou's lemma gives
$$\mathbb{E}\left[\liminf_{K \to \infty} Z_K\right] \leq \liminf_{K \to \infty} \mathbb{E}[Z_K].$$
By path continuity and sequence density, $\liminf_K Z_K = \sup_{n\in\mathbb N} X_{t_n} - g$. Since $\mathbb{E}[Z_K] = \mathbb{E}[Y_K] - \mathbb{E}[g]$, the $\mathbb{E}[g]$ cancels:
$$\mathbb{E}\left[\sup_{n\in\mathbb N} X_{t_n}\right] \le \liminf_{K \to \infty} \left(6\sqrt{2} \sigma \cdot R_{K+1}(s,D)\right)\,.$$

The final step shows 
$$\liminf_{K} R_{K+1}(s,D) \leq 2 \cdot \int_0^D \sqrt{\log N(\varepsilon, s,d)} \, d\varepsilon\,.$$ 
The approximation $R_K(s,D) \leq 2 \cdot \text{(entropy integral)} + 2 \cdot \text{(tail)}$ holds, where the tail is $\varepsilon_K \sqrt{\log N(\varepsilon_K, s)} \to 0$ as $K \to \infty$. Taking $\liminf$ eliminates the tail, yielding:
$$\mathbb{E}\left[\sup_{n\in\mathbb N} X_{t_n}\right] \le 12\sqrt{2} \cdot \int_0^D \sqrt{\log N(\varepsilon, s, d)}\,.$$

\subsection{Stage 3}

This stage converts the supremum over the uncountable set $s$ to the countable supremum over the dense sequence. Since $s$ is totally bounded in $(A\,,d)$, it is separable, so there exists a countable dense sequence $(t_n)_{n\in\mathbb N}\subseteq s$. By the sample path continuity, for each $\omega$ the map $t \mapsto X_t(\omega)$ is continuous on $s$. Since the supremum of a continuous function over a set equals its supremum over any dense subset, we have
$$\sup_{t \in s} X_t(\omega) = \sup_{n\in\mathbb N} X_{t_n}(\omega)\,.$$

Combining this with the bound from the second stage gives
$$\mathbb{E}\left[\sup_{t \in s} X_t\right] \leq 12\sqrt{2}\sigma \cdot \int_0^D \sqrt{\log N(\varepsilon, s, d)} \, d\varepsilon$$
completing the proof.
\section{Lean 4 Formalization of Least Squares}
\label{app:lean-ls}

In this section, we present more implementation details of \cref{thm:meb} and \cref{thm:dudleyapp}. We aim to show that the formalization exposes and resolves implicit assumptions missed in standard textbooks,
enforcing a granular understanding of the theory. 

\subsection{Master Error Bound}

First, the complete formalization of \cref{thm:meb} is:
\begin{lstlisting}
theorem master_error_bound (hn : 0 < n)
  (M : RegressionModel n X) (F : Set (X → ℝ)) (hF_star : M.f_star ∈ F) (δ_star : ℝ) (hδ : 0 < δ_star)
  (hCI : satisfiesCriticalInequality n M.σ δ_star (shiftedClass F M.f_star) M.x)
  (hH_star : IsStarShaped (shiftedClass F M.f_star))
  (t : ℝ) (ht : δ_star ≤ t) (f_hat : (Fin n → ℝ) → (X → ℝ))
  (hf_hat : ∀ w, isLeastSquaresEstimator (M.response w) F M.x (f_hat w))
  (hne : (empiricalSphere n (shiftedClass F M.f_star) (Real.sqrt (t * δ_star)) M.x).Nonempty)
  (hint_u : Integrable (fun w => ⨆ h ∈ localizedBall (shiftedClass F M.f_star) (Real.sqrt (t * δ_star)) M.x, |(n : ℝ)⁻¹ * ∑ i, w i * h (M.x i)|) (stdGaussianPi n))
  (hint_δ : Integrable (fun w => ⨆ h ∈ localizedBall (shiftedClass F M.f_star) δ_star M.x, |(n : ℝ)⁻¹ * ∑ i, w i * h (M.x i)|) (stdGaussianPi n))
  (hbdd : ∀ w : Fin n → ℝ, BddAbove {y | ∃ h ∈ localizedBall (shiftedClass F M.f_star) (Real.sqrt (t * δ_star)) M.x, y = |(n : ℝ)⁻¹ * ∑ i, w i * h (M.x i)|}) :
  (stdGaussianPi n {w | (empiricalNorm n (fun i => f_hat w (M.x i) - M.f_star (M.x i)))^2 ≤ 16 * t * δ_star}).toReal ≥ 1 - exp (-n * t * δ_star / (2 * M.σ^2)) := by
\end{lstlisting}
Now we will introduce the technical hypotheses:
\paragraph{\lstinline{hCI}:} The chosen radius $\delta>0$ should satisfy the critical inequality. This is used to control the probability of the bad event in \lstinline{bad_event_probability_bound}, which is a standard proof need as \citet[Lemma 13.2]{wainwright2019high}.

\paragraph{\lstinline{hH_star}:} The shifted class is star-shaped. The need follows similar reason to \lstinline{hCI}.

\paragraph{\lstinline{hne}:} The empirical sphere is non-empty. The bad event is defined in terms of a supremum over the empirical sphere at certain radius. If this sphere is empty, the supremum would be vacuously $-\infty$ or ill-defined. In practice, for most function classes (e.g., linear or constrained class), this holds automatically. But in a formal proof, this must be stated explicitly.

\paragraph{\lstinline{hint_u}:} Integrability at scale $u$, i.e. the supremum of the empirical process over the localized ball at radius $u = \sqrt{t \delta}$ is integrable with respect to the Gaussian measure. The \lstinline{bad_event_probability_bound} uses this. The proof involves computing or bounding the expectation of supremum and then applying sub-Gaussian concentration. Both steps require that the supremum random variable is integrable. In the formal proof, Lean's measure theory requires it as an explicit hypothesis.

\paragraph{\lstinline{hint_δ}:} Integrability at scale $\delta$, same as \lstinline{hint_u}, but at the radius $\delta$ instead of $u$. The proof bounds the expectation of supremum by relating it to the local Gaussian complexity, which is itself an expectation at scale $\delta$.

\paragraph{\lstinline{hbdd}:} Boundedness above of the process, which is another technical measure-theoretic condition. The proof takes suprema (\lstinline{⨆}) over localized balls. In Lean 4, \lstinline{⨆} over an unbounded set can give meaningless results (default to \lstinline{0} or \lstinline{⊥}). The \lstinline{BddAbove} condition ensures the \lstinline{iSup} is a genuine supremum.

\subsection{Localized Gaussian Complexity via Dudley}

The complete formalization of \cref{thm:dudleyapp} is:
\begin{lstlisting}
lemma local_gaussian_complexity_bound (n : ℕ) (hn : 0 < n) (H : Set (X → ℝ)) (δ : ℝ) (hδ : 0 < δ) (x : Fin n → X) (hH_star : IsStarShaped H)
  (hH_tb : TotallyBounded (empiricalMetricImage n x '' localizedBall H δ x))
  (hfinite : entropyIntegralENNReal (empiricalMetricImage n x '' localizedBall H δ x) (2*δ) ≠ ⊤)
  (hcont : ∀ w, Continuous (fun (v : ↥(empiricalMetricImage n x '' localizedBall H δ x)) => innerProductProcess n v.1 w))
  (hint_pos : Integrable (fun w => ⨆ g ∈ localizedBall H δ x, empiricalProcess n x g w) (stdGaussianPi n))
  (hint_neg : Integrable (fun w => ⨆ g ∈ localizedBall H δ x, -empiricalProcess n x g w) (stdGaussianPi n)) :
  LocalGaussianComplexity n H δ x ≤ (24 * Real.sqrt 2) / Real.sqrt n * entropyIntegral (empiricalMetricImage n x '' localizedBall H δ x) (2*δ) := by
\end{lstlisting}
Now we will introduce the technical hypotheses:
\paragraph{\lstinline{hfinite}:} The entropy integral is finite. If the entropy integral is infinite, the bound is vacuous. The finiteness in \lstinline{ENNReal} ensures the real-valued \lstinline{entropyIntegral} is well-defined and the bound is meaningful. This is a formalization concern: the external \lstinline{dudley} theorem needs to know the integral converges.

\paragraph{\lstinline{hcont}:} Pathwise continuity of the process. This is also a formalization need for the external \lstinline{dudley} theorem. This should hold automatically in finite dimensions, but in a formal proof in Lean, the continuity needs to be stated with respect to the subspace topology on the image set.

\paragraph{\lstinline{hint_pos}:} Integrability of the positive supremum. The proof manipulates integrals (uses linearity of expectation, comparison of integrands), which requires the functions being integrated to be integrable. Lean 4's integral of a non-integrable function is defined to be $0$, which would make the bound vacuously useless.

\paragraph{\lstinline{hint_neg}:} Same measure-theoretic regularity as above, but for the negative side of the process. The negative version is needed separately because the supremum of negative function is not simply negative infimum of function in terms of integrability.

\subsection{Observation}

The hypotheses \lstinline{hint_u}, \lstinline{hint_δ}, \lstinline{hint_pos}, \lstinline{hint_neg}, and \lstinline{hne} are typically missed in standard textbook treatments but act as essential roles to make the statement hold rigorously. The formalization makes precise exactly what must be verified when extending the theory to new settings.
\section{Lean 4 Formalization of Covering of $\ell_1$-Convex Hull}
\label{app:maurey}

In this section, we present more implementation details of \cref{lem:cover-convex}. Our formalized statement of \cref{lem:cover-convex} is:
\begin{lstlisting}
def l1BallImage (x : Fin n → EuclideanSpace ℝ (Fin d)) (R : ℝ) : Set (EmpiricalSpace n) :=
  {v | ∃ θ : EuclideanSpace ℝ (Fin d), l1norm θ ≤ R ∧ v = fun i => (1 / Real.sqrt n) * @inner ℝ _ _ θ (x i)}
\end{lstlisting}
\begin{lstlisting}
theorem l1BallImage_coveringNumber_le {R ε : ℝ} (hR : 0 ≤ R) (hε : 0 < ε) (x : Fin n → EuclideanSpace ℝ (Fin d)) (hcol : columnNormBound x) (hn : 0 < n) :
  coveringNumber ε (l1BallImage x R) ≤ (2 * d + 1) ^ ⌈R ^ 2 / ε ^ 2⌉₊ := by
\end{lstlisting}

The major challenge is that the formalization strategy used in \cref{lem:euclidean} can only obtain combinatorial complexity grows exponentially with $d$. So we need to formalize the \emph{probabilistic} rather than combinatorial methods.

In general, our formalization mirrors the Maurey's Empirical Method \citep{pisier2006probabilistic}.

{\bf Step 1: Probabilistic Representation} Any point $\bm v = \bm X \bm \theta/\sqrt{n}$ in $\operatorname{absconv}_1(\bm X / \sqrt{n};R)$ can be written as an expected value:
$$\bm v = \mathbb{E}[\bm Z]$$
where $\bm Z$ is a random variable taking values in $\{\bm 0\} \cup \{\pm R \cdot \bm X_{[:,j]}/\sqrt{n}\}$ with probabilities proportional to $|\theta_j|$.

{\bf Step 2: Variance Control} The random variable $\bm Z$ has bounded second moment:
$$\mathbb{E}[\|\bm Z\|^2] \leq R \cdot \|\bm \theta\|_1$$
This uses the column normalization $\|\bm X_{[:,j]}\|_2 \leq \sqrt{n}$.

{\bf Step 3: Averaging Reduces Variance}
For $k$ i.i.d. copies $\bm Z_1, \ldots,\bm  Z_k$, the average $\bar{\bm Z}_k = \frac{1}{k}\sum_{\ell=1}^k \bm Z_\ell$ satisfies:
$$\mathbb{E}[\|\bar{\bm Z}_k - \bm v\|^2] \leq \frac{R \cdot \|\bm \theta\|_1}{k} \leq \frac{R^2}{k}$$

{\bf Step 4: Existence via Pigeonhole}
Since the expected squared distance is at most $R^2/k$, there exists a specific sample achieving this bound. Setting $k = \lceil R^2/\varepsilon^2 \rceil$ ensures distance $\leq \varepsilon$.

{\bf Step 5: Finite Net Construction}
The set of all possible $k$-averages forms a finite net $\mathcal{N}_k$ with:
$$|\mathcal{N}_k| \leq (2d+1)^k$$
since each sample comes from a space of size $2d + 1$.
\section{Practical Recipe for Human-AI Collaborative Formalization}
\label{app:workflow-recipe}

This section complements \cref{sec:human-ai} with the
practical workflow recipe.

\subsection{Decomposition Strategy}
\label{sec:decomp-strategy}

Our central operational principle is to decompose every proof so
that each leaf-level lemma can be completed by the agent
\emph{within a single context window} without triggering
auto-compaction. Empirically, Claude~Opus~4.5 handles up to
$\sim$300 lines of newly authored Lean per turn before output
quality degrades. We thus target $\le 300$ lines per lemma and
prefer many small lemmas over a few large ones. The benefits
compound:
\begin{itemize}
    \item \textbf{Higher per-token reasoning budget.} Smaller
    surface area means the agent spends more of its computation on
    proof search and less on parsing context.
    \item \textbf{Bounded blast radius.} A failed lemma can be
    reattempted in isolation without re-instantiating the entire
    surrounding development.
    \item \textbf{Reusability.} Small lemmas tend to be reusable
    across the library; large monolithic proofs tend not to be.
    \item \textbf{Reviewability.} Human verification of statements
    (Section~\ref{sec:non-delegable}) is far cheaper at the
    granularity of small lemmas.
\end{itemize}

\subsection{Anti-rewrite Instruction} 

The agent's default response to a
large error volume is to abandon the proof. We include the
following sentence \emph{verbatim} in every \texttt{TASK.md}:
\begin{quote}\small\ttfamily
DO NOT FREQUENTLY CHANGE THE PROOF. ONLY DO SO WHEN YOU ARE
CONFIDENT IT IS WRONG. FIX ERRORS ONE-BY-ONE.
\end{quote}
This single instruction was responsible for the largest qualitative
improvement we observed during the project.

\subsection{Iterative Cleanup Loop}

After a lemma compiles, the resulting code typically contains two
classes of artifacts: compiler warnings (unused variables, shadowed
binders, deprecated lemmas) and dead \texttt{have} statements
introduced during exploratory proof search. Both are addressed by
dedicated cleanup passes, run in a loop until both report no
further changes.

\subsubsection{Warning Cleanup}

We use the following prompt:
\begin{quote}\small\ttfamily
Please modify the formalization code to eliminate the warning messages. **IMPORTANT:** For unused variables, you need to directly remove them instead of adding `\_`, then correct the downstream call of modified results in [LIBRARYNAME]. Please use MCP tools to diagnose. The target file to modify is [FILENAME]
\end{quote}

The explicit instruction to remove rather than underscore-prefix is
critical: Claude~Opus~4.5 strongly prefers the latter, which masks
genuine API design issues (e.g.~a hypothesis that should never have
been added) behind syntactically-clean code.

\subsubsection{Removing Unused \texttt{have} Statements}

Unused \texttt{have} statements accumulate during exploratory proof
search and bloat the final development without affecting
correctness. We implement the
\texttt{\#check\_unused\_have} command together
with the following operational prompt:

\begin{quote}\small\ttfamily
Clean up unused have statements from the target Lean file. First, use the MCP tool mcp\_\_lean-lsp\_\_lean\_file\_outline with the absolute file path to get the file outline and identify all theorem/lemma declarations in the file. Then, update @TestUnusedHave.lean to include \#check\_unused\_have command for each declaration you want to analyze (replace instead of adding). Run lake build TestUnusedHave to identify which theorems have unused have statements and note their names. For each unused statement reported, use Grep to find its exact location in the file, then use Edit to remove the entire have statement including any multi-line proof body. Work from bottom to top (highest line numbers first) to preserve line number accuracy during edits. For have statements on the same line as other tactics (like have hX := rfl; linarith), either inline the fact into the tactic (e.g., linarith [show X from rfl]) or just keep the tactic if it doesn't need the fact. After removing all identified statements, run lake build [FILENAME] to verify compilation succeeds. Then re-run lake build TestUnusedHave to check for any cascading unused statements that were exposed (statements that were only used by the ones you just removed). Repeat the removal and verification cycle until all theorems in the file report "No unused have statements found". Track progress and provide a summary of total statements removed when complete. The target Lean file is [FILENAME].
\end{quote}

\paragraph{Run to fixpoint.} Warning cleanup and unused-\texttt{have}
cleanup interact: removing a dead \texttt{have} can expose a new
unused-variable warning, and vice versa. We run both passes in
alternation until neither produces edits.

\section{Example Task Specifications}
\label{app:example}

\subsection*{v1: Extension to Strong Gaussian Sobolev Space via Density Argument}
\label{sec:task-glsi-density}

\begin{specbox}
Your task is to extend the one-dimensional Gaussian logarithmic
Sobolev inequality from twice-differentiable functions with compact
support to strong Gaussian Sobolev function class
($f\in W^{1,2}(\gamma)$ with $f\in C^1(\mathbb{R})$).

\textbf{\texttt{STRONG} MEANS THE FUNCTION IS CONTINUOUSLY
DIFFERENTIABLE}

\paragraph{Target Theorem Statement.}

For any continuously differentiable ($f\in C^1$) function
$f : \mathbb{R} \to \mathbb{R}$ with $f\in W^{1,2}(\gamma)$, we have
\[
\mathrm{Ent}(f^2) \leq 2\mathbf{E}\left[\|\nabla f(X)\|^2\right].
\]

\paragraph{What We Have Formalized.}

\subparagraph{\texttt{FoML/GaussianSobolevDense/Density.lean}.}
We have formalized:
\begin{itemize}
\item strong Gaussian Sobolev function class (1D specialized version):
\begin{lstlisting}
def MemW12GaussianReal (f : ℝ → ℝ) (γ : Measure ℝ) : Prop :=
  MemLp f 2 γ ∧ MemLp (fun x ↦ fderiv ℝ f x) 2 γ
\end{lstlisting}

\item strong Gaussian Sobolev norm squared
\begin{lstlisting}
noncomputable def GaussianSobolevNormSqReal (f : ℝ → ℝ) (γ : Measure ℝ) : ℝ≥0∞ :=
  eLpNorm f 2 γ ^ 2 + eLpNorm (fun x ↦ ‖fderiv ℝ f x‖) 2 γ ^ 2
\end{lstlisting}

\item existence of a sequence of smooth compactly supported functions
converging to any continuously differentiable $f \in W^{1,2}(\gamma)$
in the 1D Gaussian Sobolev norm
\begin{lstlisting}
theorem exists_smooth_compactSupport_seq_tendsto_real (f : ℝ → ℝ)
    (hf : MemW12GaussianReal f (gaussianReal 0 1))
    (hf_diff : Differentiable ℝ f) (hf_grad_cont : Continuous (fun x => fderiv ℝ f x)) :
    ∃ g : ℕ → (ℝ → ℝ),
      (∀ k, ContDiff ℝ (⊤ : ℕ∞) (g k)) ∧
      (∀ k, HasCompactSupport (g k)) ∧
      Tendsto (fun k => GaussianSobolevNormSqReal (f - g k) (gaussianReal 0 1)) atTop (nhds 0) := by
\end{lstlisting}
\end{itemize}

\subparagraph{\texttt{FoML/GaussianLSI/OneDimGLSI.lean}.}
We have formalized the one-dimensional Gaussian logarithmic Sobolev
inequality from twice-differentiable functions with compact support
\begin{lstlisting}
theorem gaussian_logSobolev_CompSmo {f : ℝ → ℝ} (hf : CompactlySupportedSmooth f) :
    LogSobolev.entropy stdGaussianMeasure (fun x => (f x)^2) ≤
    2 * ∫ x, (deriv f x)^2 ∂stdGaussianMeasure := by
\end{lstlisting}

\paragraph{Optimal Proof for Formalization.}

\subparagraph{Step 1: Approximating Sequence.}
By the formalized density theorem\\
\texttt{exists\_smooth\_compactSupport\_seq\_tendsto\_real} in\\
\texttt{FoML/GaussianSobolevDense/Density.lean}, there exists a
sequence $\{g_k\}_{k=1}^\infty \subset C_c^\infty(\mathbb{R}^n)$ such
that
\[
\|f - g_k\|_{W^{1,2}(\gamma)} \to 0 \quad \text{as } k \to \infty,
\]
where $f\in W^{1,2}(\gamma)$ with $f\in C^1(\mathbb{R})$.

This means (by \texttt{MemW12GaussianReal} and
\texttt{GaussianSobolevNormSqReal}):
\begin{itemize}
\item $\|f - g_k\|_{L^2(\gamma)} \to 0$, i.e.,
$\mathbf{E}[(f(X) - g_k(X))^2] \to 0$
\item $\|\nabla f - \nabla g_k\|_{L^2(\gamma)} \to 0$, i.e.,
$\mathbf{E}[\|\nabla f(X) - \nabla g_k(X)\|^2] \to 0$
\end{itemize}

\subparagraph{Step 2: Convergence of the Gradient Term.}
Since $\nabla g_k \to \nabla f$ in $L^2(\gamma)$, we have
\[
\mathbf{E}[\|\nabla g_k(X)\|^2] \to \mathbf{E}[\|\nabla f(X)\|^2]
\quad \text{as } k \to \infty.
\]
\emph{Proof:} By the triangle inequality in $L^2(\gamma)$,
\[
\left| \|\nabla g_k\|_{L^2(\gamma)} - \|\nabla f\|_{L^2(\gamma)} \right|
\leq \|\nabla g_k - \nabla f\|_{L^2(\gamma)} \to 0.
\]

\subparagraph{Step 3: Convergence of $g_k^2$ to $f^2$ in
$L^1(\gamma)$.}
We show that $g_k^2 \to f^2$ in $L^1(\gamma)$.

Note that
\[
|g_k^2 - f^2| = |g_k - f| \cdot |g_k + f|.
\]
By Cauchy--Schwarz:
\[
\mathbf{E}[|g_k^2 - f^2|]
\leq \|g_k - f\|_{L^2(\gamma)} \cdot \|g_k + f\|_{L^2(\gamma)}.
\]
Since $g_k \to f$ in $L^2(\gamma)$, the sequence
$\{\|g_k\|_{L^2(\gamma)}\}$ is bounded, so
$\|g_k + f\|_{L^2(\gamma)}$ is bounded. Therefore,
\[
\mathbf{E}[|g_k^2 - f^2|] \to 0.
\]

\subparagraph{Step 4: Lower Semicontinuity of Entropy.}
The entropy functional is \textbf{lower semicontinuous} with respect
to $L^1(\gamma)$ convergence. That is, if $h_k \to h$ in $L^1(\gamma)$
with $h_k, h \geq 0$, then
\[
\mathrm{Ent}(h) \leq \liminf_{k \to \infty} \mathrm{Ent}(h_k).
\]
\emph{Proof of lower semicontinuity:} This follows from the convexity
of $\phi(t) = t \log t$ and Fatou-type arguments. Specifically, since
$g_k^2 \to f^2$ in $L^1(\gamma)$, by passing to a subsequence we may
assume $g_k^2 \to f^2$ almost surely. The function $\phi(t) = t \log t$
(extended to be $0$ at $t=0$) is bounded below, and by a refined
version of Fatou's lemma for entropy (or by direct convexity
arguments), we obtain the lower semicontinuity.

\subparagraph{Step 5: Completing the Proof.}
By \texttt{gaussian\_logSobolev\_CompSmo} in\\
\texttt{FoML/GaussianLSI/1dGLSI.lean}, for each
$g_k \in C_c^\infty(\mathbb{R}^n)$, the log-Sobolev inequality holds:
\[
\mathrm{Ent}(g_k^2) \leq 2\mathbf{E}[\|\nabla g_k(X)\|^2].
\]
Taking $\liminf$ as $k \to \infty$:
\[
\liminf_{k \to \infty} \mathrm{Ent}(g_k^2)
\leq \liminf_{k \to \infty} 2\mathbf{E}[\|\nabla g_k(X)\|^2]
= 2\mathbf{E}[\|\nabla f(X)\|^2],
\]
where the equality uses Step 2.

By lower semicontinuity (Step 4):
\[
\mathrm{Ent}(f^2) \leq \liminf_{k \to \infty} \mathrm{Ent}(g_k^2).
\]
Combining these:
\[
\mathrm{Ent}(f^2) \leq 2\mathbf{E}[\|\nabla f(X)\|^2].
\]
\hfill$\square$

\paragraph{Rules.}
\begin{enumerate}
\item Write all formalization code in
\texttt{FoML/GaussianLSI/OneDimGLSI.lean}.
\item \textbf{Success = \texttt{lake build} passes + zero sorries +
zero custom axioms.} Theorems with sorries/axioms are scaffolding,
not results.
\item Make proper use of MCP Tools.
\item DO NOT BE AFRAID OF COMPLEXITY OR LACK OF INFRASTRUCTURES. IF
COMPLEX, TRY TO USE A LEMMA-BASED MODULAR APPROACH, DECOMPOSE THE
PROOF INTO LEMMAS THEN SOLVE ONE-BY-ONE. IF LACK OF INFRASTRUCTURES,
BUILD REQUIRED INFRASTRUCTURES ONE-BY-ONE.
\end{enumerate}
\end{specbox}

\subsection*{v2: Lower Semicontinuity of Entropy}
\label{sec:task-lsc-entropy}

\begin{specbox}

\paragraph{What We Have Formalized.}

\subparagraph{\texttt{FoML/GaussianSobolevDense/Density.lean}.}
We have formalized:
\begin{itemize}
\item strong Gaussian Sobolev function class (1D specialized version):
\begin{lstlisting}
def MemW12GaussianReal (f : ℝ → ℝ) (γ : Measure ℝ) : Prop :=
  MemLp f 2 γ ∧ MemLp (fun x ↦ fderiv ℝ f x) 2 γ
\end{lstlisting}

\item strong Gaussian Sobolev norm squared
\begin{lstlisting}
noncomputable def GaussianSobolevNormSqReal (f : ℝ → ℝ) (γ : Measure ℝ) : ℝ≥0∞ :=
  eLpNorm f 2 γ ^ 2 + eLpNorm (fun x ↦ ‖fderiv ℝ f x‖) 2 γ ^ 2
\end{lstlisting}

\item existence of a sequence of smooth compactly supported functions
converging to any continuously differentiable $f \in W^{1,2}(\gamma)$
in the 1D Gaussian Sobolev norm
\begin{lstlisting}
theorem exists_smooth_compactSupport_seq_tendsto_real (f : ℝ → ℝ)
    (hf : MemW12GaussianReal f (gaussianReal 0 1))
    (hf_diff : Differentiable ℝ f) (hf_grad_cont : Continuous (fun x => fderiv ℝ f x)) :
    ∃ g : ℕ → (ℝ → ℝ),
      (∀ k, ContDiff ℝ (⊤ : ℕ∞) (g k)) ∧
      (∀ k, HasCompactSupport (g k)) ∧
      Tendsto (fun k => GaussianSobolevNormSqReal (f - g k) (gaussianReal 0 1)) atTop (nhds 0) := by
\end{lstlisting}
\end{itemize}

\subparagraph{\texttt{FoML/GaussianLSI/Entropy.lean}.}
We have formalized:
\begin{itemize}
\item definition of entropy
\begin{lstlisting}
def entropy (μ : Measure Ω) (f : Ω → ℝ) : ℝ :=
  ∫ ω, f ω * log (f ω) ∂μ - (∫ ω, f ω ∂μ) * log (∫ ω, f ω ∂μ)
\end{lstlisting}

\item entropy of $f^2$ with respect to the normalization
\begin{lstlisting}
def entropySquare (μ : Measure Ω) (f : Ω → ℝ) : ℝ :=
  entropy μ (fun ω => (f ω)^2)
\end{lstlisting}

\item properties of entropy (please read carefully and use properly):
\begin{enumerate}
\item \texttt{entropy\_const}: Entropy of a constant function is
zero.
\item \texttt{entropy\_congr}: If $f = g$ a.e., then their entropies
are equal.
\item \texttt{jensen\_mul\_log}: Jensen's inequality for the convex
function $x \cdot \log(x)$.
\item \texttt{entropy\_nonneg}: Entropy is nonnegative for
probability measures and nonnegative integrands.
\item \texttt{entropySquare\_eq}: The entropy of $f^2$ is always
well-defined in the sense that the integrand
$f^2 \cdot \log(f^2) = 2 \cdot f^2 \cdot \log|f|$ is measurable when
$f$ is.
\item \texttt{entropy\_sq\_normalized},
\texttt{log\_sq\_eq\_two\_mul\_log\_abs}.
\item \texttt{entropy\_sq\_abs\_log}: Entropy of $f^2$ in terms of
$2 \cdot f^2 \cdot \log|f|$.
\end{enumerate}
\end{itemize}

\subparagraph{\texttt{FoML/FatouReal.lean}.}
\textbf{Fatou's Lemma for Nonnegative Real Functions}: For a sequence
of nonnegative measurable integrable functions with integrable
liminf:
\begin{lstlisting}
theorem integral_liminf_le_of_nonneg
    {f : ℕ → Ω → ℝ}
    (hf_meas : ∀ n, Measurable (f n))
    (hf_nonneg : ∀ n ω, 0 ≤ f n ω)
    (hf_int : ∀ n, Integrable (f n) μ)
    (h_liminf_int : Integrable (fun ω => liminf (fun n => f n ω) atTop) μ)
    (h_cobdd : IsCoboundedUnder (· ≥ ·) atTop (fun n => ∫ ω, f n ω ∂μ)) :
    ∫ ω, liminf (fun n => f n ω) atTop ∂μ ≤
    liminf (fun n => ∫ ω, f n ω ∂μ) atTop := by
\end{lstlisting}

This is derived from the \texttt{ENNReal} Fatou lemma via the
transformation:
\begin{enumerate}
\item Embed via \texttt{ENNReal.ofReal}.
\item Apply \texttt{lintegral\_liminf\_le}.
\item Convert back using integral--lintegral relationships.
\end{enumerate}
Note: The \texttt{h\_cobdd} assumption ensures the real-valued
liminf is well-defined (not the junk value $0$ from an unbounded
sequence). This holds automatically when integrals are bounded above.

\subparagraph{\texttt{FoML/ConvergenceL1Subseq.lean}.}
Convergence in $L^1(\mu)$ yields an a.e.-convergent subsequence
\begin{lstlisting}
theorem exists_seq_tendsto_ae_of_tendsto_eLpNorm_one
    [NormedAddCommGroup E] {f : ℕ → α → E} {g : α → E}
    (hf : ∀ n, AEStronglyMeasurable (f n) mu)
    (hg : AEStronglyMeasurable g mu)
    (hfg : Tendsto (fun n => eLpNorm (f n - g) (1 : ENNReal) mu) atTop (nhds 0)) :
    ∃ ns : ℕ → ℕ, StrictMono ns ∧
      ∀ᵐ x ∂mu, Tendsto (fun i => f (ns i) x) atTop (nhds (g x)) := by
\end{lstlisting}

\paragraph{Target Theorem Statement.}
By
\texttt{exists\_smooth\_compactSupport\_seq\_tendsto\_real}, we can
construct a sequence of smooth compactly supported functions
$\{g_k\}_{k=1}^\infty \subset C_c^\infty(\mathbb{R})$ converging to
any $f$ in the strong Gaussian Sobolev function class
($f \in W^{1,2}(\gamma)$ with $f \in C^1(\mathbb{R})$) in the 1D
Gaussian Sobolev norm.

\textbf{\texttt{STRONG} MEANS THE FUNCTION IS CONTINUOUSLY
DIFFERENTIABLE}

Therefore, our target theorem is a consequent result:
\[
\mathrm{Ent}(f^2) \leq \liminf_{k \to \infty} \mathrm{Ent}(g_k^2).
\]

\paragraph{Optimal Proof for Formalization.}

\subparagraph{Step 1: Approximating Sequence.}
By the formalized density theorem\\
\texttt{exists\_smooth\_compactSupport\_seq\_tendsto\_real} in\\
\texttt{FoML/GaussianSobolevDense/Density.lean}, there exists a
sequence $\{g_k\}_{k=1}^\infty \subset C_c^\infty(\mathbb{R})$ such
that
\[
\|f - g_k\|_{W^{1,2}(\gamma)} \to 0 \quad \text{as } k \to \infty,
\]
where $f \in W^{1,2}(\gamma)$ with $f \in C^1(\mathbb{R})$.

By \texttt{MemW12GaussianReal} and
\texttt{GaussianSobolevNormSqReal}, we can obtain:
\[
\|f - g_k\|_{L^2(\gamma)} \to 0, \quad \text{i.e., }
\mathbf{E}[(f(X) - g_k(X))^2] \to 0.
\]

\subparagraph{Step 2: Convergence of $g_k^2$ to $f^2$ in
$L^1(\gamma)$.}
We show that $g_k^2 \to f^2$ in $L^1(\gamma)$.

Note that
\[
|g_k^2 - f^2| = |g_k - f| \cdot |g_k + f|.
\]
By Cauchy-Schwarz (Use \texttt{LeanSearch} MCP Tool to search for
the Holder's inequality and use proper conjugate pair $p$ and $q$):
\[
\mathbf{E}[|g_k^2 - f^2|]
\leq \|g_k - f\|_{L^2(\gamma)} \cdot \|g_k + f\|_{L^2(\gamma)}.
\]
Since $g_k \to f$ in $L^2(\gamma)$, the sequence
$\{\|g_k\|_{L^2(\gamma)}\}$ is bounded, so
$\|g_k + f\|_{L^2(\gamma)}$ is bounded. Therefore,
\[
\mathbf{E}[|g_k^2 - f^2|] \to 0.
\]

\subparagraph{Step 3: Lower Semicontinuity of Entropy.}
The entropy functional is \textbf{lower semicontinuous} with respect
to $L^1(\gamma)$ convergence. That is, if $h_k \to h$ in $L^1(\gamma)$
with $h_k, h \geq 0$, then
\[
\mathrm{Ent}(h) \leq \liminf_{k \to \infty} \mathrm{Ent}(h_k).
\]
\begin{enumerate}
\item Take $h_k = g_k^2$ and $h = f^2$.
\item Proof needs a series of integrability,
\textbf{remember all integrabilities can be proven by the
compactness of
$\{g_k\}_{k=1}^\infty \subset C_c^\infty(\mathbb{R}^n)$ and $f$
belongs to strong Gaussian Sobolev space ($f \in L^2(\gamma)$ and
$f' \in L^2(\gamma)$)}.
\item Following from the convexity of $\phi(t) = t \log t$ and
real-valued Fatou lemma
\texttt{integral\_liminf\_le\_of\_nonneg}. Specifically, since
$g_k^2 \to f^2$ in $L^1(\gamma)$, by
\texttt{exists\_seq\_tendsto\_ae\_of\_tendsto\_eLpNorm\_one}, there
exists a subsequence $g_{k'}^2 \to f^2$ almost surely. The function
$\phi(t) = t \log t$ (extended to be $0$ at $t=0$) is bounded below,
and by \texttt{integral\_liminf\_le\_of\_nonneg} for entropy, we
obtain the lower semicontinuity.
\end{enumerate}

\paragraph{Rules.}
\begin{enumerate}
\item Create a file in \texttt{FoML/GaussianSobolevDense/} named
\texttt{LiminfContEnt.lean} and write all formalization code in this
file.
\item \textbf{Success = \texttt{lake build} passes + zero sorries +
zero custom axioms.} Theorems with sorries/axioms are scaffolding,
not results.
\item Make proper use of MCP Tools.
\item DO NOT BE AFRAID OF COMPLEXITY OR LACK OF INFRASTRUCTURES. IF
COMPLEX, TRY TO USE A LEMMA-BASED MODULAR APPROACH, DECOMPOSE THE
PROOF INTO LEMMAS THEN SOLVE ONE-BY-ONE. IF LACK OF INFRASTRUCTURES,
BUILD REQUIRED INFRASTRUCTURES ONE-BY-ONE.
\end{enumerate}

\end{specbox}

\subsection*{v3: Convergence of the Gradient Term in $L^2(\gamma)$}
\label{sec:task-gradient-convergence}

\begin{specbox}

\paragraph{What We Have Formalized.}

\subparagraph{\texttt{FoML/GaussianSobolevDense/Density.lean}.}
We have formalized:
\begin{itemize}
\item strong Gaussian Sobolev function class (1D specialized version):
\begin{lstlisting}
def MemW12GaussianReal (f : ℝ → ℝ) (γ : Measure ℝ) : Prop :=
  MemLp f 2 γ ∧ MemLp (fun x ↦ fderiv ℝ f x) 2 γ
\end{lstlisting}

\item strong Gaussian Sobolev norm squared
\begin{lstlisting}
noncomputable def GaussianSobolevNormSqReal (f : ℝ → ℝ) (γ : Measure ℝ) : ℝ≥0∞ :=
  eLpNorm f 2 γ ^ 2 + eLpNorm (fun x ↦ ‖fderiv ℝ f x‖) 2 γ ^ 2
\end{lstlisting}

\item existence of a sequence of smooth compactly supported functions
converging to any continuously differentiable $f \in W^{1,2}(\gamma)$
in the 1D Gaussian Sobolev norm
\begin{lstlisting}
theorem exists_smooth_compactSupport_seq_tendsto_real (f : ℝ → ℝ)
    (hf : MemW12GaussianReal f (gaussianReal 0 1))
    (hf_diff : Differentiable ℝ f) (hf_grad_cont : Continuous (fun x => fderiv ℝ f x)) :
    ∃ g : ℕ → (ℝ → ℝ),
      (∀ k, ContDiff ℝ (⊤ : ℕ∞) (g k)) ∧
      (∀ k, HasCompactSupport (g k)) ∧
      Tendsto (fun k => GaussianSobolevNormSqReal (f - g k) (gaussianReal 0 1)) atTop (nhds 0) := by
\end{lstlisting}
\end{itemize}

\subparagraph{\texttt{FoML/GaussianLSI/Entropy.lean}.}
We have formalized:
\begin{itemize}
\item definition of entropy
\begin{lstlisting}
def entropy (μ : Measure Ω) (f : Ω → ℝ) : ℝ :=
  ∫ ω, f ω * log (f ω) ∂μ - (∫ ω, f ω ∂μ) * log (∫ ω, f ω ∂μ)
\end{lstlisting}

\item entropy of $f^2$ with respect to the normalization
\begin{lstlisting}
def entropySquare (μ : Measure Ω) (f : Ω → ℝ) : ℝ :=
  entropy μ (fun ω => (f ω)^2)
\end{lstlisting}

\item properties of entropy (please read carefully and use properly):
\begin{enumerate}
\item \texttt{entropy\_const}: Entropy of a constant function is
zero.
\item \texttt{entropy\_congr}: If $f = g$ a.e., then their entropies
are equal.
\item \texttt{jensen\_mul\_log}: Jensen's inequality for the convex
function $x \cdot \log(x)$.
\item \texttt{entropy\_nonneg}: Entropy is nonnegative for
probability measures and nonnegative integrands.
\item \texttt{entropySquare\_eq}: The entropy of $f^2$ is always
well-defined in the sense that the integrand
$f^2 \cdot \log(f^2) = 2 \cdot f^2 \cdot \log|f|$ is measurable when
$f$ is.
\item \texttt{entropy\_sq\_normalized},
\texttt{log\_sq\_eq\_two\_mul\_log\_abs}.
\item \texttt{entropy\_sq\_abs\_log}: Entropy of $f^2$ in terms of
$2 \cdot f^2 \cdot \log|f|$.
\end{enumerate}
\end{itemize}

\subparagraph{\texttt{FoML/GaussianSobolevDense/LiminfContEnt.lean}.}
We have formalized:
\begin{itemize}
\item convergence in Gaussian Sobolev norm (squared) implies
convergence in $L^2(\gamma)$:
\begin{lstlisting}
lemma tendsto_eLpNorm_sub_of_sobolev (f : ℝ → ℝ) (g : ℕ → ℝ → ℝ) (μ : Measure ℝ)
    (h_tend : Tendsto (fun k => GaussianSobolevNormSqReal (f - g k) μ) atTop (nhds 0)) :
    Tendsto (fun k => eLpNorm (f - g k) 2 μ) atTop (nhds 0) := by
\end{lstlisting}

\item convergence in $L^2(\gamma)$ of real-valued functions implies
convergence in $L^1(\gamma)$ of their pointwise squares:
\begin{lstlisting}
lemma tendsto_eLpNorm_sub_of_sobolev (f : ℝ → ℝ) (g : ℕ → ℝ → ℝ) (μ : Measure ℝ)
    (h_tend : Tendsto (fun k => GaussianSobolevNormSqReal (f - g k) μ) atTop (nhds 0)) :
    Tendsto (fun k => eLpNorm (f - g k) 2 μ) atTop (nhds 0) := by
\end{lstlisting}
\end{itemize}

\paragraph{Target Theorem Statement.}
By
\texttt{exists\_smooth\_compactSupport\_seq\_tendsto\_real}, we can
construct a sequence of smooth compactly supported functions
$\{g_k\}_{k=1}^\infty \subset C_c^\infty(\mathbb{R})$ converging to
any $f$ in the strong Gaussian Sobolev function class
($f \in W^{1,2}(\gamma)$ with $f \in C^1(\mathbb{R})$) in the 1D
Gaussian Sobolev norm.

\textbf{\texttt{STRONG} MEANS THE FUNCTION IS CONTINUOUSLY
DIFFERENTIABLE}

Therefore, our target formalizations (as a lemma) are:
\[
\int \|\nabla g_k\|_2^2\,d\gamma
\;\longrightarrow\;
\int \|\nabla f\|_2^2\,d\gamma.
\]

\paragraph{Optimal Proof for Formalization.}

\subparagraph{Step 1: Approximating Sequence.}
By the formalized density theorem\\
\texttt{exists\_smooth\_compactSupport\_seq\_tendsto\_real} in\\
\texttt{FoML/GaussianSobolevDense/Density.lean}, there exists a
sequence $\{g_k\}_{k=1}^\infty \subset C_c^\infty(\mathbb{R})$ such
that
\[
\|f - g_k\|_{W^{1,2}(\gamma)} \to 0 \quad \text{as } k \to \infty,
\]
where $f \in W^{1,2}(\gamma)$ with $f \in C^1(\mathbb{R})$.

By \texttt{MemW12GaussianReal} and
\texttt{GaussianSobolevNormSqReal}, we can obtain:
\[
\|\nabla f - \nabla g_k\|_{L^2(\gamma)} \to 0,
\quad \text{i.e.,}\quad
\mathbf{E}[\|\nabla f(X) - \nabla g_k(X)\|^2] \to 0.
\]

\subparagraph{Step 2: Convergence of the Gradient Term.}
Since $\nabla g_k \to \nabla f$ in $L^2(\gamma)$, we have
\[
\mathbf{E}[\|\nabla g_k(X)\|^2] \to \mathbf{E}[\|\nabla f(X)\|^2]
\quad \text{as } k \to \infty.
\]
\emph{Proof sketch:} By the triangle inequality in $L^2(\gamma)$,
\[
\left| \|\nabla g_k\|_{L^2(\gamma)} - \|\nabla f\|_{L^2(\gamma)} \right|
\leq \|\nabla g_k - \nabla f\|_{L^2(\gamma)} \to 0.
\]

\paragraph{Rules.}
\begin{enumerate}
\item Write all formalization code in \texttt{LiminfContEnt.lean}.
\item \textbf{Success = \texttt{lake build} passes + zero sorries +
zero custom axioms.} Theorems with sorries/axioms are scaffolding,
not results.
\item Make proper use of MCP Tools.
\item DO NOT BE AFRAID OF COMPLEXITY OR LACK OF INFRASTRUCTURES. IF
COMPLEX, TRY TO USE A LEMMA-BASED MODULAR APPROACH, DECOMPOSE THE
PROOF INTO LEMMAS THEN SOLVE ONE-BY-ONE. IF LACK OF INFRASTRUCTURES,
BUILD REQUIRED INFRASTRUCTURES ONE-BY-ONE.
\end{enumerate}

\end{specbox}

\subsection*{v4: GLSI Extension to Strong Gaussian Sobolev Space}
\label{sec:task-glsi-final}

\begin{specbox}

\paragraph{What We Have Formalized.}

\subparagraph{\texttt{FoML/GaussianSobolevDense/Density.lean}.}
We have formalized:
\begin{itemize}
\item strong Gaussian Sobolev function class (1D specialized version):
\begin{lstlisting}
def MemW12GaussianReal (f : ℝ → ℝ) (γ : Measure ℝ) : Prop :=
  MemLp f 2 γ ∧ MemLp (fun x ↦ fderiv ℝ f x) 2 γ
\end{lstlisting}
\textbf{NOTICE THAT YOU CAN PROVE MOST INTEGRABILITY OR FINITENESS
FROM \texttt{MemW12GaussianReal}}.

\item strong Gaussian Sobolev norm squared
\begin{lstlisting}
noncomputable def GaussianSobolevNormSqReal (f : ℝ → ℝ) (γ : Measure ℝ) : ℝ≥0∞ :=
  eLpNorm f 2 γ ^ 2 + eLpNorm (fun x ↦ ‖fderiv ℝ f x‖) 2 γ ^ 2
\end{lstlisting}

\item existence of a sequence of smooth compactly supported functions
converging to any continuously differentiable $f \in W^{1,2}(\gamma)$
in the 1D Gaussian Sobolev norm
\begin{lstlisting}
theorem exists_smooth_compactSupport_seq_tendsto_real (f : ℝ → ℝ)
    (hf : MemW12GaussianReal f (gaussianReal 0 1))
    (hf_diff : Differentiable ℝ f) (hf_grad_cont : Continuous (fun x => fderiv ℝ f x)) :
    ∃ g : ℕ → (ℝ → ℝ),
      (∀ k, ContDiff ℝ (⊤ : ℕ∞) (g k)) ∧
      (∀ k, HasCompactSupport (g k)) ∧
      Tendsto (fun k => GaussianSobolevNormSqReal (f - g k) (gaussianReal 0 1)) atTop (nhds 0) := by
\end{lstlisting}
\end{itemize}

\subparagraph{\texttt{FoML/GaussianLSI/Entropy.lean}.}
We have formalized:
\begin{itemize}
\item definition of entropy
\begin{lstlisting}
def entropy (μ : Measure Ω) (f : Ω → ℝ) : ℝ :=
  ∫ ω, f ω * log (f ω) ∂μ - (∫ ω, f ω ∂μ) * log (∫ ω, f ω ∂μ)
\end{lstlisting}

\item entropy of $f^2$ with respect to the normalization
\begin{lstlisting}
def entropySquare (μ : Measure Ω) (f : Ω → ℝ) : ℝ :=
  entropy μ (fun ω => (f ω)^2)
\end{lstlisting}

\item properties of entropy (please read carefully and use properly):
\begin{enumerate}
\item \texttt{entropy\_const}: Entropy of a constant function is
zero.
\item \texttt{entropy\_congr}: If $f = g$ a.e., then their entropies
are equal.
\item \texttt{jensen\_mul\_log}: Jensen's inequality for the convex
function $x \cdot \log(x)$.
\item \texttt{entropy\_nonneg}: Entropy is nonnegative for
probability measures and nonnegative integrands.
\item \texttt{entropySquare\_eq}: The entropy of $f^2$ is always
well-defined in the sense that the integrand
$f^2 \cdot \log(f^2) = 2 \cdot f^2 \cdot \log|f|$ is measurable when
$f$ is.
\item \texttt{entropy\_sq\_normalized},
\texttt{log\_sq\_eq\_two\_mul\_log\_abs}.
\item \texttt{entropy\_sq\_abs\_log}: Entropy of $f^2$ in terms of
$2 \cdot f^2 \cdot \log|f|$.
\end{enumerate}
\end{itemize}

\subparagraph{\texttt{FoML/GaussianSobolevDense/LiminfContEnt.lean}.}
We have formalized:
\begin{itemize}
\item $\phi(t) = t \cdot \log(t)$ is bounded below by $-1/e$ for
$t \geq 0$:
\begin{lstlisting}
lemma mul_log_ge_neg_inv_exp (t : ℝ) (ht : 0 ≤ t) : -(1 / Real.exp 1) ≤ t * Real.log t := by
\end{lstlisting}

\item convergence in Gaussian Sobolev norm (squared) implies
convergence in $L^2(\gamma)$:
\begin{lstlisting}
lemma tendsto_eLpNorm_sub_of_sobolev (f : ℝ → ℝ) (g : ℕ → ℝ → ℝ) (μ : Measure ℝ)
    (h_tend : Tendsto (fun k => GaussianSobolevNormSqReal (f - g k) μ) atTop (nhds 0)) :
    Tendsto (fun k => eLpNorm (f - g k) 2 μ) atTop (nhds 0) := by
\end{lstlisting}

\item convergence in $L^2(\gamma)$ of real-valued functions implies
convergence in $L^1(\gamma)$ of their pointwise squares:
\begin{lstlisting}
lemma tendsto_eLpNorm_sub_of_sobolev (f : ℝ → ℝ) (g : ℕ → ℝ → ℝ) (μ : Measure ℝ)
    (h_tend : Tendsto (fun k => GaussianSobolevNormSqReal (f - g k) μ) atTop (nhds 0)) :
    Tendsto (fun k => eLpNorm (f - g k) 2 μ) atTop (nhds 0) := by
\end{lstlisting}

\item convergence in Gaussian Sobolev norm (squared) implies
convergence of gradient in $L^2(\gamma)$:
\begin{lstlisting}
lemma tendsto_integral_norm_fderiv_sq_of_sobolev (f : ℝ → ℝ) (g : ℕ → ℝ → ℝ) (μ : Measure ℝ)
    (hf_diff : Differentiable ℝ f) (hf_grad_cont : Continuous (fun x => fderiv ℝ f x))
    (hg_diff : ∀ k, Differentiable ℝ (g k))
    (hg_grad_cont : ∀ k, Continuous (fun x => fderiv ℝ (g k) x))
    (hf_mem : MemLp (fun x => ‖fderiv ℝ f x‖) 2 μ)
    (hg_mem : ∀ k, MemLp (fun x => ‖fderiv ℝ (g k) x‖) 2 μ)
    (h_tend : Tendsto (fun k => GaussianSobolevNormSqReal (f - g k) μ) atTop (nhds 0)) :
    Tendsto (fun k => ∫ x, ‖fderiv ℝ (g k) x‖^2 ∂μ) atTop
      (nhds (∫ x, ‖fderiv ℝ f x‖^2 ∂μ)) := by
\end{lstlisting}
\end{itemize}

\subparagraph{\texttt{FoML/GaussianLSI/OneDimGLSI.lean}.}
We have formalized the one-dimensional Gaussian logarithmic Sobolev
inequality for twice-differentiable functions with compact support
\begin{lstlisting}
theorem gaussian_logSobolev_CompSmo_fderiv {f : ℝ → ℝ} (hf : CompactlySupportedSmooth f) :
    LogSobolev.entropy stdGaussianMeasure (fun x => (f x)^2) ≤
    2 * ∫ x, ‖fderiv ℝ f x‖^2 ∂stdGaussianMeasure := by
\end{lstlisting}

\subparagraph{\texttt{FoML/ConvergenceL1Subseq.lean}.}
Convergence in $L^1(\mu)$ yields an a.e.-convergent subsequence
\begin{lstlisting}
theorem exists_seq_tendsto_ae_of_tendsto_eLpNorm_one
    [NormedAddCommGroup E] {f : ℕ → α → E} {g : α → E}
    (hf : ∀ n, AEStronglyMeasurable (f n) mu)
    (hg : AEStronglyMeasurable g mu)
    (hfg : Tendsto (fun n => eLpNorm (f n - g) (1 : ENNReal) mu) atTop (nhds 0)) :
    ∃ ns : ℕ → ℕ, StrictMono ns ∧
      ∀ᵐ x ∂mu, Tendsto (fun i => f (ns i) x) atTop (nhds (g x)) := by
\end{lstlisting}

\paragraph{Target Theorem Statement.}
By
\texttt{exists\_smooth\_compactSupport\_seq\_tendsto\_real}, we can
construct a sequence of smooth compactly supported functions
$\{g_k\}_{k=1}^\infty \subset C_c^\infty(\mathbb{R})$ converging to
any $f$ in the strong Gaussian Sobolev function class
($f \in W^{1,2}(\gamma)$ with $f \in C^1(\mathbb{R})$) in the 1D
Gaussian Sobolev norm.

Your task is to extend the one-dimensional Gaussian logarithmic
Sobolev inequality from twice-differentiable functions with compact
support to strong Gaussian Sobolev function class
($f \in W^{1,2}(\gamma)$ with $f \in C^1(\mathbb{R})$).

\textbf{\texttt{STRONG} MEANS THE FUNCTION IS CONTINUOUSLY
DIFFERENTIABLE}

Therefore, our \textbf{TARGET THEOREM} is:
\begin{quote}
For any continuously differentiable ($f \in C^1$) function
$f : \mathbb{R} \to \mathbb{R}$ with $f \in W^{1,2}(\gamma)$, we have
\[
\mathrm{Ent}(f^2) \leq 2\mathbf{E}\left[\|\nabla f(X)\|^2\right].
\]
\end{quote}

\paragraph{Optimal Proof for Formalization.}

\subparagraph{Step 1: Preliminaries about ($x \log x$),
WELL-FORMALIZED.}
Define $\phi : [0,\infty) \to \mathbb{R}$ by $\phi(x) = x \log x$
(has been formalized as \texttt{def phi}).

\textbf{Fact 1 (continuity).}
$\phi$ is continuous on $[0,\infty)$. This has been formalized as
\texttt{phi\_continuous}.

\textbf{Fact 2 (bounded below).}
\[
\inf_{x \ge 0} \phi(x) = -\frac{1}{e}.
\]
Hence
\begin{equation*}
\phi(x) + \frac{1}{e} \ge 0 \quad \text{for all } x \ge 0.
\tag{1}
\end{equation*}
This ``shift to nonnegativity'' is what makes Fatou immediately
applicable, which has been formalized in
\texttt{mul\_log\_ge\_neg\_inv\_exp}.

\subparagraph{Step 2: A subsequence lemma (from $L^{1}$ to a.e.),
WELL-FORMALIZED.}

\textbf{Lemma (a.e. convergent subsequence).}
Convergence in $L^1(\mu)$ yields an a.e.-convergent subsequence,
has been formalized as
\texttt{exists\_seq\_tendsto\_ae\_of\_tendsto\_eLpNorm\_one} in
\texttt{FoML/ConvergenceL1Subseq.lean}.

\subparagraph{Step 3: Lower semicontinuity of $\int g \log g$ via
Fatou's lemma.}

\textbf{IMPORTANT: FROM HERE, WE WORK IN ENNREAL!!!}

\textbf{Proposition (lower semicontinuity).}
Assume $q_k \ge 0$ and $q_k \to q$ in $L^{1}(\gamma)$. Then
\begin{equation*}
\int \phi(q) \, d\gamma \le \liminf_{k \to \infty} \int \phi(q_k) \, d\gamma,
\quad \text{i.e.} \quad
\int q \log q \, d\gamma \le \liminf_{k \to \infty} \int q_k \log q_k \, d\gamma,
\tag{3}
\end{equation*}
\textbf{allowing the value $+\infty$ on either side.} (We will set
$q_k = g^2_k$, $q = f^2$ later.)

\emph{Proof.}
Let
\[
L := \liminf_{k \to \infty} \int \phi(q_k) \, d\gamma
\in [-1/e, \infty].
\]
Choose a subsequence (still denoted $q_k$ for simplicity) such that
\begin{equation*}
\int \phi(q_k) \, d\gamma \to L. \tag{4}
\end{equation*}
(4) can be proven via: set
\[
a_k := \int \phi(q_k) \, d\mu \in \left[-\frac{1}{e}, \infty\right]
\quad \text{where } \phi(x) = x \log x.
\]
Let
\[
L := \liminf_{k \to \infty} a_k \in \left[-\frac{1}{e}, \infty\right].
\]
If $L < \infty$, then there exists a subsequence
$(a_{k_j}^*)_{j \ge 1}$ such that
\[
a^*_{k_j} \to L.
\]
If $L = \infty$, then there is nothing to choose:
``$\le \liminf$'' is automatic whenever the right-hand side is
$+\infty$.

Define the tail infima
\[
b_j := \inf_{k \ge j} a_k.
\]
Then $(b_j)$ is nondecreasing and $b_j \uparrow L$ by definition of
$\liminf$.

For each $j$, by the definition of infimum, there exists some index
$k_j \ge j$ such that
\[
a_{k_j} \le b_j + \frac{1}{j}.
\]
Then,
\[
b_j \le a_{k_j} \le b_j + \frac{1}{j},
\]
so as $j \to \infty$,
\[
a_{k_j} \to L.
\]
This is the subsequence you used. It is a standard ``almost
minimizer of tail infimum'' construction.

Given $q_k \to q$ in $L^{1}(\gamma)$, apply the lemma
\texttt{exists\_seq\_tendsto\_ae\_of\_tendsto\_eLpNorm\_one} to
extract a \textbf{subsequence} such that $q_k \to q$ a.e.

By continuity of $\phi$ (Fact 1), we have $\phi(q_k) \to \phi(q)$
a.e., hence
\begin{equation*}
\phi(q) \le \liminf_{k \to \infty} \phi(q_k) \quad \text{a.e.}
\tag{5}
\end{equation*}
Now shift to nonnegativity using (1): define
\[
h_k := \phi(q_k) + \frac{1}{e} \ge 0, \qquad
h := \phi(q) + \frac{1}{e} \ge 0.
\]
From (5),
\[
h \le \liminf_k h_k \quad \text{a.e.}
\]
Fatou's lemma for nonnegative functions
\textbf{(\texttt{lintegral\_liminf\_le}!!!)} gives
\[
\int h \, d\gamma \le \liminf_{k \to \infty} \int h_k \, d\gamma.
\]
Subtract $1/e$ from both sides (note $\gamma$ is a probability
measure, so $\int \tfrac{1}{e} \, d\gamma = \tfrac{1}{e}$) to get
\[
\int \phi(q) \, d\gamma \le \liminf_{k \to \infty} \int \phi(q_k) \, d\gamma.
\]
Using (4), the right-hand side equals $L$, which is
$\liminf_k \int \phi(g_k)$. This proves (3).
\hfill$\blacksquare$

\textbf{Integrability remarks.}
\begin{itemize}
\item We did \textbf{not} assume $\phi(g_k) \in L^1$. If
$\int \phi(g_k) = +\infty$ along a subsequence, Fatou still applies
because $h_k \ge 0$ and the integrals are allowed to be $+\infty$.
\item The negative part is never a problem because
$\phi \ge -1/e$; that is exactly why the shift works.
\end{itemize}

\subparagraph{Step 4: Lower semicontinuity of entropy.}
Define, for $q \ge 0$,
\[
\mathrm{Ent}(q) = \int q \log q \, d\gamma - m \log m,
\qquad m := \int q \, d\gamma.
\]

\textbf{Corollary.}
If $q_k \ge 0$ and $q_k \to q$ in $L^1(\gamma)$, then
\begin{equation*}
\mathrm{Ent}(q) \le \liminf_{k \to \infty} \mathrm{Ent}(q_k).
\tag{6}
\end{equation*}

\emph{Proof.}
From $L^1$-convergence,
$m_k := \int q_k \, d\gamma \to \int q \, d\gamma = m$. The map
$x \mapsto x \log x$ is continuous on $[0,\infty)$, so
\begin{equation*}
m_k \log m_k \to m \log m. \tag{7}
\end{equation*}
From the proposition,
\begin{equation*}
\int q \log q \, d\gamma \le \liminf_{k \to \infty} \int q_k \log q_k \, d\gamma.
\tag{8}
\end{equation*}
Combine (7)--(8):
\[
\mathrm{Ent}(q)
= \int q \log q \, d\gamma - m \log m
\le \liminf_k \int q_k \log q_k \, d\gamma - \lim_k m_k \log m_k
= \liminf_k \mathrm{Ent}(q_k).
\]
\hfill$\blacksquare$

\subparagraph{Step 5: Application in Our Case.}
Take $q_k = f_k^2$ and $q = f^2$. So the corollary gives
\[
\mathrm{Ent}(f^2) \le \liminf_{k \to \infty} \mathrm{Ent}(g_k^2),
\]
which is the exact lower semicontinuity.

\subparagraph{Step 6: Back to Real Space.}
By \texttt{tendsto\_integral\_norm\_fderiv\_sq\_of\_sobolev} and
\texttt{gaussian\_logSobolev\_CompSmo\_fderiv}, we can obtain
\begin{equation*}
\liminf_{k \to \infty} \mathrm{Ent}(g_k)
\le
\liminf_{k \to \infty} 2 \int |\nabla g_k|^2 \, d\gamma
= 2 \int |\nabla f|^2 \, d\gamma
< \infty.
\tag{$*$}
\end{equation*}
Now combine $(*)$ with
\[
\mathrm{Ent}(f^2) \le \liminf_{k \to \infty} \mathrm{Ent}(g_k^2),
\]
to get all finiteness, then we can convert back to real space.

\subparagraph{Conclude the Limit.}
\[
\mathrm{Ent}(f^2) \leq 2\mathbf{E}\left[\|\nabla f(X)\|^2\right].
\]
follows from results formalized, add the proof by yourself.

\paragraph{Rules.}
\begin{enumerate}
\item Write all formalization code in \texttt{LiminfContEnt.lean}.
\item \textbf{Success = \texttt{lake build} passes + zero sorries +
zero custom axioms.} Theorems with sorries/axioms are scaffolding,
not results.
\item Make proper use of MCP Tools.
\item DO NOT BE AFRAID OF COMPLEXITY OR LACK OF INFRASTRUCTURES. IF
COMPLEX, TRY TO USE A LEMMA-BASED MODULAR APPROACH, DECOMPOSE THE
PROOF INTO LEMMAS THEN SOLVE ONE-BY-ONE. IF LACK OF INFRASTRUCTURES,
BUILD REQUIRED INFRASTRUCTURES ONE-BY-ONE.
\end{enumerate}

\end{specbox}

\end{document}